\newtheorem{lemma}{Lemma}
\newtheorem{theorem}{Theorem}
\newtheorem{definition}{Definition}
\newtheorem{problem}{Problem}
\newtheorem{remark}{Remark}
\DeclareMathOperator*{\argmin}{arg\,min}
\begin{document}
%
\title{Laplacian Convolutional Representation for Traffic Time Series Imputation}
%
%
%

\author{Xinyu~Chen, 
Zhanhong~Cheng, 
HanQin Cai,
Nicolas~Saunier, 
Lijun~Sun,~\IEEEmembership{Senior~Member,~IEEE}
\IEEEcompsocitemizethanks{
\IEEEcompsocthanksitem X. Chen and N. Saunier are with the Civil, Geological and Mining Engineering Department, Polytechnique Montreal, Montreal, QC H3T 1J4, Canada. E-mail: chenxy346@gmail.com (X. Chen), nicolas.saunier@polymtl.ca (N. Saunier).
\IEEEcompsocthanksitem Z. Cheng and L. Sun are with the Department of Civil Engineering, McGill University, Montreal, QC H3A 0C3, Canada. E-mail: zhanhong.cheng@mail.mcgill.ca (Z. Cheng), lijun.sun@mcgill.ca (L. Sun).
\IEEEcompsocthanksitem H. Cai is with the Department of Statistics and Data Science and Department of Computer Science,
University of Central Florida,
Orlando, FL 32816, USA. E-mail: hqcai@ucf.edu}
\thanks{(Corresponding author: Nicolas Saunier)}
\thanks{}}

\IEEEtitleabstractindextext{%
\begin{abstract}
Spatiotemporal traffic data imputation is of great significance in intelligent transportation systems and data-driven decision-making processes. To perform efficient learning and accurate reconstruction from partially observed traffic data, we assert the importance of characterizing both global and local trends in time series. In the literature, substantial works have demonstrated the effectiveness of utilizing the low-rank property of traffic data by matrix/tensor completion models. In this study, we first introduce a Laplacian kernel to temporal regularization for characterizing local trends in traffic time series, which can be formulated as a circular convolution. Then, we develop a low-rank Laplacian convolutional representation (LCR) model by putting the circulant matrix nuclear norm and the Laplacian kernelized temporal regularization together, which is proved to meet a unified framework that has a fast Fourier transform (FFT) solution in log-linear time complexity. Through extensive experiments on several traffic datasets, we demonstrate the superiority of LCR over several baseline models for imputing traffic time series of various time series behaviors (e.g., data noises and strong/weak periodicity) and reconstructing sparse speed fields of vehicular traffic flow. The proposed LCR model is also an efficient solution to large-scale traffic data imputation over the existing imputation models.
\end{abstract}

\begin{IEEEkeywords}
Spatiotemporal traffic data, time series imputation, low-rank models, circulant matrix nuclear norm, Laplacian kernelized regularization, circular convolution, discrete Fourier transform, fast Fourier transform
\end{IEEEkeywords}
}

\maketitle


%
\IEEEpeerreviewmaketitle

\section{Introduction}
%
%
%
%
\IEEEPARstart{M}{issing} data imputation is a fundamental component to a wide range of applications in intelligent transportation systems (ITS), including route planning, travel time estimation, and traffic flow forecasting. Typically, traffic data can be collected by sensors (e.g., loop detectors and video cameras) on a continuous basis, producing a sequence of traffic flow time series such as speed and volume measurements. However, the real-world ITS often suffers from various operational issues such as sensor failure and network communication disorder, leading to data corruption and sparsity. Making accurate recovery of these data is vital for supporting ITS, but it still demands appropriate imputation approaches.


The basic modeling idea of missing data imputation in traffic time series is to exploit complicated spatial and temporal correlations/dependencies from partial observations, consequently leading to extensive data-driven approaches such as low-rank models \cite{chen2021bayesian} and deep learning methods \cite{miao2021generative, wu2021inductive}. 
Typically, traffic flow data always show strong global and local trends with long- and short-term patterns \cite{li2015trend}. The global trends usually refer to as certain periodic and cyclical patterns, which can be well characterized by low-rank models. Unfortunately, conventional low-rank models fail to characterize the time series dynamics because the reconstruction of low-rank models (e.g., low-rank matrix completion (LRMC) \cite{candes2010matrix, cai2010singular}) is invariant to the permutation of rows and columns. Thus, recent studies presented low-rank time series completion models based on certain algebraic structures, including Hankel matrices/tensors \cite{yokota2018missing, cai2019fast, cai2021accelerated}, circulant matrices/tensors \cite{yamamoto2022fast, liu2022recovery}, and convolution matrices \cite{liu2022recovery, liu2022time}. The sequential dependencies are implicitly captured by these structures when characterizing the low-rank property of time series. However, low-rank Hankel/convolution models are always limited to small- or middle-scale problems due to the large size of algebraic structures. Although circulant matrix nuclear norm minimization (CircNNM) can be efficiently solved through the fast Fourier transform (FFT), circulant matrices are restricted and fail to characterize the local trends of time series \cite{liu2022recovery}.



The default structure of most low-rank models---rank minimization whose objective is essentially converted into the nuclear norm minimization and takes a singular value thresholding \cite{candes2010matrix, cai2010singular}---does not ensure local smoothness. Thus, it requires us to model both global and local trends in a unified framework. In the literature, there are several ways to characterize the local spatial and temporal dependencies in data-driven machine learning models. For example, on the spatial dimension, Laplacian regularization has become a standard technique to impose local consistency (see e.g., \cite{cai2010graph, mao2018spatio}). On the temporal dimension, the local smoothness is often characterized using time series smoothing and autoregression explicitly (see e.g., \cite{xiong2010temporal, yu2016temporal, chen2022nonstationary}). Considering the importance of temporal regularization in regulating the behavior of global low-rank models, we are inspired to develop a tailored regularization for CircNNM to reinforce local trends while maintaining the algorithm's efficiency through FFT. Thus, using the fact that the Laplacian matrix of a circulant graph is a circulant matrix, we first introduce a novel temporal regularization to CircNNM. Next, we develop an efficient algorithm to solve the proposed Laplacian convolutional representation (LCR) using the alternating direction method of multipliers (ADMM). 
The contribution of this work is three-fold:
\begin{itemize}
\item We introduce a circular Laplacian kernel and use it to define a temporal regularization for characterizing the local trends in time series. By doing so, the temporal regularization can be formulated with circular convolution and draw the connection with FFT.
\item We propose a low-rank completion model---LCR---by characterizing global trends of sparse traffic time series as the nuclear norm of a circulant matrix and modeling local trends by the temporal regularization simultaneously. According to the properties of the circulant matrix and circular convolution, we present a fast implementation of LCR through FFT.
\item We empirically verify the importance of temporal regularization in LCR. 
Further, due to the fast implementation via FFT with log-linear time complexity, LCR is scalable to large imputation problems. The experimental results demonstrate that LCR performs better than the state-of-the-art baseline methods in terms of both accuracy and efficiency.
\end{itemize}

In practice, there is often a significant gap between accurate estimation and efficient implementation in the substantial imputation methods due to methodological challenges. For instance, while introducing algebraic structures \cite{yokota2018missing, yamamoto2022fast, liu2022time, liu2022recovery} or spatiotemporal smoothness \cite{chen2019abayesian,yu2016temporal} usually enhances accuracy, these approaches often come with a significantly higher computational cost. To address this gap, this work advances low-rank completion methods in the following ways. 
(i) The basic modeling idea of LCR stems from CircNNM \cite{liu2022recovery, liu2022time}, but LCR reinforces local time series trends with temporal regularization. (ii) LCR advances convolution nuclear norm minimization (ConvNNM, capable of global/local trend modeling depending on the kernel size) \cite{liu2022recovery, liu2022time} with a flexible time series modeling mechanism. (iii) LCR follows an efficient FFT implementation as CircNNM, while FFT cannot be used in ConvNNM (see e.g., Fig.~\ref{empirical_time_complexity_curve_convnnm_vs_lcr} for the computational cost). (iv) The flipping operation (see Fig.~\ref{flip_matrix}) in LCR addresses the issue caused by the correlation between the start and end data points of the time series. 

The remainder of this paper is structured as follows. Section~\ref{sec:related_work} and \ref{preliminaries} introduce the related work and some basic concepts, respectively. 
In Section~\ref{methodology}, we integrate the temporal regularization into low-rank models for characterizing both global and local trends in traffic time series. Section~\ref{univariate_experiment} and Section~\ref{multivariate_experiment} conduct imputation experiments on several real-world datasets. Finally, we conclude this study in Section~\ref{conclusion}.

\section{Related Work}\label{sec:related_work}


\subsection{Low-Rank Completion with Algebraic Structures}

Recent studies show great interest in time series completion with certain algebraic structures, e.g., the Hankel and circulant matrices. These approaches overcome some critical limitations of pure LRMC models, e.g., (i) LRMC is incapable of handling the entire row/column missing, (ii) LRMC is invariant to the permutation of rows/columns, and (iii) LRMC is not applicable to the case of univariate time series. For example, the model developed by Yokota \emph{et al}. \cite{yokota2018missing} can recover the missing slices of tensor using Hankel structure. Sedighin \emph{et al}. \cite{sedighin2020matrix} applied the tensor train decomposition to tensors obtained by multi-way Hankel structures and found better completion performance. In both studies, the low-rank methods on Hankel structures---replicating the data with certain sliding rules---are well-suited to modeling spatiotemporal dependencies and learning from sparse data.

A critical property of a circulant matrix is that its nuclear norm can be efficiently obtained via FFT. Using this property, Yamamoto \emph{et al}. \cite{yamamoto2022fast} proposed a fast tensor completion method; Liu and Zhang \cite{liu2022recovery} used the nuclear norm minimization of circulant matrices for missing data recovery and time series forecasting. Despite the fast algorithm for the circulant matrix, circulant-matrix-based models are inadequate in capturing the local trend/continuity in time series. Therefore, the ConvNNM model shows better local trends modeling if the kernel size of the convolution matrix is set as a relatively small value \cite{liu2022recovery}. Further, Liu \cite{liu2022time} proposed a learnable and orthonormal transformation for ConvNNM to reinforce its modeling ability when the convolutional low-rankness condition is not fully satisfied.



\subsection{Imputation with Temporal Modeling}

In the literature, considerable research has leveraged temporal dynamics in low-rank models for time series imputation. A common assumption among these models is that time series and their low-rank factors have local dependencies. 
Chen and Cichocki \cite{chen2005nonnegative} proposed a Toeplitz-matrix-based regularization to impose temporal smoothness in matrix factorization; the regularizer handles the difference between the low-rank factors of consecutive times. A similar regularizer based on the Toeplitz matrix was used by \cite{wang2018traffic} for traffic data reconstruction. Chen \emph{et al.} \cite{chen2021scalable} applied a quadratic variation (QV) to a traffic tensor completion problem to ensure temporal smoothness. For modeling low-dimensional temporal dynamics, Xiong \emph{et al.} \cite{xiong2010temporal} formulated a Bayesian tensor factorization with first-order Markovian assumptions on the temporal factors. Yu \emph{et al.} \cite{yu2016temporal} developed a temporal autoregressive regularizer in matrix factorization. Chen \emph{et al.} \cite{chen2021low} developed a low-rank autoregressive tensor completion model for traffic data imputation. While these two works assume the univariate autoregression, Chen and Sun \cite{chen2021bayesian} applied a vector autoregression on the latent temporal factors and developed a fully Bayesian model for multidimensional and sparse time series prediction. 
As Laplacian regularization is of broad use in graph modeling, it is also applicable to temporal modeling. For example, Rao \emph{et al.} \cite{rao2015collaborative} proposed a matrix completion algorithm with Laplacian regularization. To the best of our knowledge, we are the first to present a Laplacian kernelized temporal regularization with circular convolution which consequently leads to the use of FFT. 


\section{Preliminaries}\label{preliminaries}

In this section, we introduce the basic definitions of the circulant matrix, convolution matrix, and circular convolution, in the meanwhile summarizing their relationships. 

\subsection{Circulant Matrix}

The circulant matrix is an important structure that shows broad use in the field of signal processing \cite{hansen2006deblurring, wright2022high}. By definition, for any vector $\boldsymbol{x}=(x_1,x_2,\cdots,x_{T})^\top\in\mathbb{R}^{T}$, the circulant matrix can be written as follows,
\begin{equation}\label{circulant_matrix}
\mathcal{C}(\boldsymbol{x})\triangleq\begin{bmatrix}
x_{1} & x_{T} & x_{T-1} & \cdots & x_{2} \\
x_{2} & x_{1} & x_{T} & \cdots & x_{3} \\
x_{3} & x_{2} & x_{1} & \cdots & x_{4} \\
\vdots & \vdots & \vdots & \ddots & \vdots \\
x_{T} & x_{T-1} & x_{T-2} & \cdots & x_{1} \\
\end{bmatrix}\in\mathbb{R}^{T\times T},
\end{equation}
where $\mathcal{C}:\mathbb{R}^{T}\to\mathbb{R}^{T\times T}$ denotes the circulant operator. The first column of $\mathcal{C}(\boldsymbol{x})$ is the vector $\boldsymbol{x}$, and the diagonal entries of $\mathcal{C}(\boldsymbol{x})$ are all equal to $x_1$. 

\subsection{Convolution Matrix}

Convolution is vital to a variety of machine learning problems. By definition, for any vectors $\boldsymbol{x}=(x_1,x_2,\cdots,x_{T})^\top\in\mathbb{R}^{T}$ and $\boldsymbol{y}=(y_1,y_2,\cdots,y_{\tilde{\tau}})^\top\in\mathbb{R}^{\tilde{\tau}}$ with $\tilde{\tau}\leq T$, the circular convolution of two vectors is $\boldsymbol{z}=\boldsymbol{x}\star\boldsymbol{y}\in\mathbb{R}^{T}$ \cite{brunton2022data}, denoting the operator with the symbol $\star$; element-wise, we have
\begin{equation}
z_{t}=\sum_{k=1}^{\tilde{\tau}}x_{t-k+1} y_{k},\,\forall t\in\{1,2,\ldots,T\},
\end{equation}
where $z_{t}$ is the $t$th entry of $\boldsymbol{z}$ and $x_{t-k+1}=x_{t-k+1+T}$ for $t+1\leq k$. In particular, circular convolution is a linear operator that can be expressed $\boldsymbol{x}\star\boldsymbol{y}\equiv\mathcal{C}_{\tilde{\tau}}(\boldsymbol{x})\boldsymbol{y}$ where $\mathcal{C}_{\tilde{\tau}}:\mathbb{R}^{T}\to\mathbb{R}^{T\times \tilde{\tau}}$ denotes the convolution operator with kernel size $\tilde{\tau}$. The resultant convolution matrix consists of the first $\tilde{\tau}$ columns of the circulant matrix $\mathcal{C}(\boldsymbol{x})$ \cite{liu2022recovery, liu2022time}. Given any vectors $\boldsymbol{x},\boldsymbol{y}\in\mathbb{R}^{T}$ of the same length, then we have $\boldsymbol{x}\star\boldsymbol{y}\equiv\mathcal{C}(\boldsymbol{x})\boldsymbol{y}$.

\section{Methodology}\label{methodology}

In this section, we introduce an efficient LCR model for imputing sparse traffic time series. 
To resolve the optimization problem, we seek an FFT implementation in the frequency domain within the ADMM, instead of the time domain.

\subsection{Laplacian Kernel}

Laplacian matrix is a classical structure for representing the links among nodes in a graph. In this work, we extract the temporal dependencies of time series through undirected and circulant graphs. Recall that the Laplacian matrix by definition takes $\boldsymbol{L}=\boldsymbol{D}-\boldsymbol{A}$ in which $\boldsymbol{D}$ and $\boldsymbol{A}$ are the (diagonal) degree matrix and adjacency matrix, respectively \cite{sandryhaila2013discrete}. 
In Fig.~\ref{laplacian_examples}, the Laplacian matrices of both graphs are circulant matrices, and their first columns are $\boldsymbol{\ell}=(2,-1,0,0,-1)^\top$ and $\boldsymbol{\ell}=(4,-1,-1,-1,-1)^\top$, respectively.

\begin{figure}[!ht]
\centering
\subfigure[Circulant graph with degree 2.]{
    \centering
    \includegraphics[width = 0.225\textwidth]{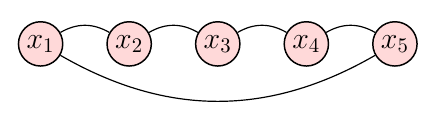}\label{laplacian_examples_degree_2}
}
\subfigure[circulant graph with degree 4.]{
    \centering
    \includegraphics[width = 0.225\textwidth]{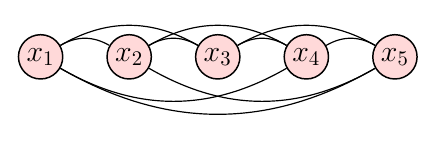}
}
\caption{Undirected and circulant graphs on the relational data samples $\{{x}_{1},{x}_{2},\ldots,{x}_{5}\}$ with certain degrees.}
\label{laplacian_examples}
\end{figure}

In this work, we introduce a Laplacian kernel as described in Definition~\ref{laplacian_kernel_def}, allowing one to characterize temporal dependencies of time series. In aforementioned cases as shown in Fig.~\ref{laplacian_examples}, the first column of the Laplacian matrix $\boldsymbol{L}$ is indeed a simple example of Laplacian kernel.

\begin{definition}[Laplacian Kernel]\label{laplacian_kernel_def}
Given any time series $\boldsymbol{x}\in\mathbb{R}^{T}$, suppose $\tau\in\mathbb{Z}^{+}$ ($\tau\leq\frac{1}{2}(T-1)$) be the kernel size of an undirected and circulant graph, then the Laplacian kernel is defined as
\begin{equation}
\boldsymbol{\ell}\triangleq(2\tau,\underbrace{-1,\cdots,-1}_{\tau},0,\cdots,0,\underbrace{-1,\cdots,-1}_{\tau})^\top\in\mathbb{R}^{T},
\end{equation}
which is also the first column of the Laplacian matrix and the inherent degree matrix is diagonalized with entries $2\tau$.
\end{definition}

\begin{remark}\label{flip_vec_remark}
The circulant operation assumes that the start data points and the end data points are connected, which is a disadvantage in real-world data analysis. To overcome this issue, on any time series $\boldsymbol{x}\in\mathbb{R}^{T}$, one can construct the following vector:
\begin{equation}
\boldsymbol{x}_{\operatorname{new}}=\begin{bmatrix}
\boldsymbol{x} \\ \boldsymbol{J}_{T}\boldsymbol{x} \\
\end{bmatrix}=(x_1,\cdots,x_T,x_T,\cdots,x_1)^\top\in\mathbb{R}^{2T},
\end{equation}
where $\boldsymbol{J}_{T}\in\mathbb{R}^{T\times T}$ is the exchange matrix whose antidiagonal entries are one and other entries are zero. 
\end{remark}



Typically, the temporal regularization calculates how values of $\boldsymbol{x}$ differ from their adjacent values, and can thus be used as a regularization of local temporal smoothness. According to the definitions of Laplacian kernel and circulant matrix and the relationship between circular convolution and circulant matrix, we declare the following form:
\begin{equation}\label{laplacian_regularization}
\mathcal{R}_\tau(\boldsymbol{x})=\frac{1}{2}\|\boldsymbol{L}\boldsymbol{x}\|_{2}^2=\frac{1}{2}\|\mathcal{C}(\boldsymbol{\ell}) \boldsymbol{x} \|_2^2=\frac{1}{2}\|\boldsymbol{\ell}\star\boldsymbol{x}\|_{2}^2.
\end{equation}
where $\|\cdot\|_2$ denotes the $\ell_2$-norm of a vector. As can be seen, the Laplacian kernel can represent the graphical relationship in the Laplacian matrix, showing no need for constructing the Laplacian matrix anymore. The setting of the degree $2\tau$ depends on the strength of local dependencies and the missing data scenarios (e.g., missing rate).

\begin{remark}
Definition~\ref{laplacian_kernel_def} allows one to obtain a more flexible design of the kernel $\boldsymbol{\ell}$. For example, if we introduce a directed Laplacian kernel in the form of random walk \cite{xiong2010temporal, cai2010graph} (i.e., $\boldsymbol{\ell}=(1,0,\cdots,0,-1)^\top\in\mathbb{R}^{T}$), then the temporal regularization is equivalent to the QV regularization, namely,
\begin{equation}\label{QV_laplacian_matrix}
\frac{1}{2}\|\boldsymbol{\ell}\star\boldsymbol{x}\|_{2}^{2}=\frac{1}{2}\boldsymbol{x}^\top\tilde{\boldsymbol{L}}\boldsymbol{x},
\end{equation}
where $\tilde{\boldsymbol{L}}\in\mathbb{R}^{T\times T}$ is a circulant matrix with the degree 2.
\end{remark}


\begin{theorem}[Convolution Theorem \cite{brunton2022data}]\label{convolution_theorem}
For any vectors $\boldsymbol{x},\boldsymbol{y}\in\mathbb{R}^{T}$, a circular convolution in the time domain is a product in the frequency domain, and it always holds that
\begin{equation}
\boldsymbol{x}\star\boldsymbol{y}=\mathcal{F}^{-1}(\mathcal{F}(\boldsymbol{x})\circ\mathcal{F}(\boldsymbol{y})),
\end{equation}
where $\mathcal{F}(\cdot)$ and $\mathcal{F}^{-1}(\cdot)$ denote the discrete Fourier transform (DFT) and the inverse DFT, respectively. $\mathcal{F}(\boldsymbol{x}),\mathcal{F}(\boldsymbol{y})\in\mathbb{C}^{T}$ are the results of DFT on $\boldsymbol{x},\boldsymbol{y}$ with $\mathbb{C}$ denoting the set of complex numbers. The symbol $\circ$ denotes the Hadamard product.
\end{theorem}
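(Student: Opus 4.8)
The plan is to prove the slightly stronger frequency-domain identity $\mathcal{F}(\boldsymbol{x}\star\boldsymbol{y})=\mathcal{F}(\boldsymbol{x})\circ\mathcal{F}(\boldsymbol{y})$ and then apply $\mathcal{F}^{-1}$ to both sides; since $\mathcal{F}^{-1}\mathcal{F}$ is the identity map, this is equivalent to the stated form. First I would fix the transform convention, writing the $m$-th entry as $(\mathcal{F}(\boldsymbol{x}))_m=\sum_{t=1}^{T}x_t\,\omega^{-(m-1)(t-1)}$ with $\omega=e^{2\pi\mathrm{i}/T}$ the primitive $T$-th root of unity, and analogously for $\boldsymbol{y}$. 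Note that the particular sign and indexing convention is immaterial to the conclusion.

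Next I would compute $(\mathcal{F}(\boldsymbol{x}\star\boldsymbol{y}))_m$ directly by substituting the element-wise definition of circular convolution with $\tau=T$, namely $(\boldsymbol{x}\star\boldsymbol{y})_t=\sum_{k=1}^{T}x_{t-k+1}y_k$ with indices read modulo $T$. This yields a double sum over $t$ and $k$. The decisive step is the change of summation variable $s\equiv t-k+1\pmod{T}$: because the exponential $\omega^{-(m-1)(\cdot)}$ is $T$-periodic in its argument, as $t$ ranges over a full residue system so does $s$ for each fixed $k$, and the exponent factorizes as $\omega^{-(m-1)(t-1)}=\omega^{-(m-1)(s-1)}\,\omega^{-(m-1)(k-1)}$. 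The double sum then separates into the product $\bigl(\sum_{s}x_s\,\omega^{-(m-1)(s-1)}\bigr)\bigl(\sum_{k}y_k\,\omega^{-(m-1)(k-1)}\bigr)=(\mathcal{F}(\boldsymbol{x}))_m\,(\mathcal{F}(\boldsymbol{y}))_m$, which is exactly the $m$-th entry of the Hadamard product. Applying $\mathcal{F}^{-1}$ then recovers the claimed time-domain identity.

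I expect the only genuine obstacle to be the bookkeeping with the wrap-around (modular) indexing of the circular convolution: one must verify carefully that $s\equiv t-k+1$ is a bijection on $\{1,\dots,T\}$ for each fixed $k$, and that the $T$-periodicity of $\omega$ is precisely what makes the two factors independent of the shift by $k$. Everything else is routine algebra, so once this separation is justified the result is immediate.

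As an alternative route that sidesteps the index manipulation, I could invoke the circulant structure: by Eq.~\eqref{convolution_over_circulant_matrix} we have $\boldsymbol{x}\star\boldsymbol{y}=\mathcal{C}(\boldsymbol{x})\boldsymbol{y}$, and every circulant matrix is diagonalized by the Fourier basis as $\mathcal{C}(\boldsymbol{x})=\boldsymbol{F}^{-1}\operatorname{diag}(\mathcal{F}(\boldsymbol{x}))\boldsymbol{F}$. Substituting and collapsing gives $\mathcal{C}(\boldsymbol{x})\boldsymbol{y}=\boldsymbol{F}^{-1}\operatorname{diag}(\mathcal{F}(\boldsymbol{x}))\,\mathcal{F}(\boldsymbol{y})=\mathcal{F}^{-1}(\mathcal{F}(\boldsymbol{x})\circ\mathcal{F}(\boldsymbol{y}))$ at once; this simply trades the summation bookkeeping for the standard fact that the columns of the inverse DFT matrix are the eigenvectors of any circulant matrix.
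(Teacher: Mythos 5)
Your proof is correct, but note that the paper itself offers no proof of this statement: it is quoted as a standard result with a citation to the literature and then used as a black box to derive Eq.~\eqref{temporal_regularization_Fourier}. Your first argument---expanding $(\mathcal{F}(\boldsymbol{x}\star\boldsymbol{y}))_m$ as a double sum, substituting $s\equiv t-k+1\pmod{T}$, and using the $T$-periodicity of $\omega$ to factor the exponential so that the two sums separate---is the standard textbook proof, and your attention to the wrap-around indexing (bijectivity of the shift modulo $T$ for each fixed $k$) is exactly the step that needs care; it is fine as written. Your alternative route is also sound and is in fact closer in spirit to the paper's own toolkit: the diagonalization $\mathcal{C}(\boldsymbol{x})=\boldsymbol{F}^{-1}\operatorname{diag}(\mathcal{F}(\boldsymbol{x}))\boldsymbol{F}$ is precisely the eigendecomposition the paper invokes (in unitary-normalized form) in the proof of Lemma~\ref{convolution_nuclear_norm}, and combined with Eq.~\eqref{convolution_over_circulant_matrix} it gives the theorem in one line. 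The only caveat on that second route is that the circulant diagonalization is itself usually established by the same kind of root-of-unity index computation (verifying that the Fourier modes are eigenvectors of $\mathcal{C}(\boldsymbol{x})$ with eigenvalues $(\mathcal{F}(\boldsymbol{x}))_m$), so it trades your bookkeeping for an equivalent standard fact rather than eliminating it; either way the argument is complete.
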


Essentially, Theorem~\ref{convolution_theorem} describes the relationship between circular convolution and DFT, showing that the circular convolution can be implemented in the frequency domain. The circulant matrix is advantageous because the required matrix-vector product can usually be done efficiently by leveraging the structure. Since the Laplacian kernel stems from circulant matrices, the temporal regularization in Eq.~\eqref{laplacian_regularization} can be thus reformulated as follows,
\begin{equation}\label{temporal_regularization_Fourier}
\mathcal{R}_\tau(\boldsymbol{x})=\frac{1}{2}\|\boldsymbol{\ell}\star\boldsymbol{x}\|_{2}^2=\frac{1}{2T}\|\mathcal{F}(\boldsymbol{\ell})\circ\mathcal{F}(\boldsymbol{x})\|_{2}^{2}.
\end{equation}


\begin{remark}
We can prove Eq.~\eqref{temporal_regularization_Fourier} as follows. Let
\begin{equation}
\begin{cases}
\boldsymbol{\alpha}=\boldsymbol{\ell}\star\boldsymbol{x},\\
\boldsymbol{\beta}=\mathcal{F}(\boldsymbol{\ell})\circ\mathcal{F}(\boldsymbol{x}),
\end{cases}
\end{equation}
then it takes $\boldsymbol{\alpha}=\mathcal{F}^{-1}(\boldsymbol{\beta})$ and $\mathcal{F}(\boldsymbol{\alpha})=\boldsymbol{\beta}$ (see Theorem~\ref{convolution_theorem}). Thus, according to the Parseval's theorem \cite{brunton2022data}, we get
\begin{equation}
\|\boldsymbol{\alpha}\|_{2}^{2}=\frac{1}{T}\|\mathcal{F}(\boldsymbol{\alpha})\|_{2}^{2}=\frac{1}{T}\|\boldsymbol{\beta}\|_{2}^{2},
\end{equation}
as claimed in Eq.~\eqref{temporal_regularization_Fourier}.
\end{remark}

\subsection{Univariate Time Series Imputation}

\subsubsection{Problem Definition}

Spatiotemporal traffic data modeling is vital to several ITS applications. Typically, traffic flow data by nature involve certain time series characteristics, e.g., global daily/weekly rhythm and local trends. However, such kind of time series are usually incomplete or even sparse due to unpredictable data collection processes. In the univariate case, the imputation problem can be summarized as Problem~\ref{univariate_imputation_problem}.

\begin{problem}[Univariate Time Series Imputation]\label{univariate_imputation_problem}
For any partially observed time series $\boldsymbol{y}\in\mathbb{R}^{T}$ with observed index set $\Omega$, the goal is to impute the missing data $\mathcal{P}_{\Omega}^\perp(\boldsymbol{y})$ from $\mathcal{P}_{\Omega}(\boldsymbol{y})$. Herein, $\mathcal{P}_{\Omega}:\mathbb{R}^{T}\to\mathbb{R}^{T}$ denotes the orthogonal projection supported on $\Omega$, while $\mathcal{P}_{\Omega}^\perp:\mathbb{R}^{T}\to\mathbb{R}^{T}$ denotes the orthogonal projection supported on the complement of $\Omega$.
\end{problem}

\begin{remark}
On the vector $\boldsymbol{y}\in\mathbb{R}^{T}$ with observed index set $\Omega$, the operator $\mathcal{P}_{\Omega}(\cdot)$ can be described as follows,
\begin{equation}
[\mathcal{P}_{\Omega}(\boldsymbol{y})]_{t}=\begin{cases}
y_{t},&\text{if $t\in\Omega$,} \\
0,&\text{otherwise,} \\
\end{cases}
\end{equation}
where $t=1,2,\ldots,T$.
\end{remark}

\subsubsection{Model Description}

Although ConvNNM and CircNNM can reconstruct missing values in time series, both models fail to incorporate global and local consistency appropriately. 
In this work, we propose the LCR imputation model, in which we utilize circulant matrix nuclear norm to pursue the global trends and use the temporal regularization to characterize the local trends in time series (see Fig.~\ref{framework} for an illustration). Formally, the LCR model can be formulated as follows,
\begin{equation}\label{lcr_opt_prob_v1}
\begin{aligned}
\min_{\boldsymbol{x}}~&\|\mathcal{C}(\boldsymbol{x})\|_{*}+\gamma\cdot\mathcal{R}_\tau(\boldsymbol{x}) \\
\text{s.t.}~~&\mathcal{P}_{\Omega}(\boldsymbol{x})=\mathcal{P}_{\Omega}(\boldsymbol{y}),
\end{aligned}
\end{equation}
where $\|\cdot\|_{*}$ denotes the nuclear norm of matrix (i.e., the sum of singular values). 
The vector $\boldsymbol{x}\in\mathbb{R}^T$ is the reconstructed time series corresponding to the partially observed time series $\boldsymbol{y}$. 
In the objective function, $\gamma$ is the weight parameter.

\begin{figure}[!ht]
    \centering
    \includegraphics[width=0.45\textwidth]{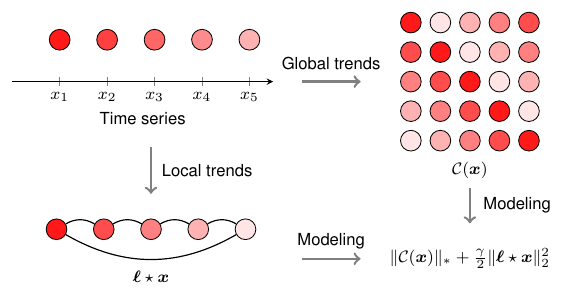}
    \caption{Illustration of the proposed LCR model.}
    \label{framework}
\end{figure}

Since traffic time series data are usually noisy, the strong observation constraint in Eq.~\eqref{lcr_opt_prob_v1} should be replaced by $\|\mathcal{P}_{\Omega}(\boldsymbol{z}-\boldsymbol{y})\|_{2}\leq \epsilon$ in which $\epsilon\geq0$ is the tolerance. Now, the optimization problem of LCR is given by
\begin{equation}\label{lcr_opt_prob}
\begin{aligned}
\min_{\boldsymbol{x}}~&\|\mathcal{C}(\boldsymbol{x})\|_{*}+\gamma\cdot\mathcal{R}_\tau(\boldsymbol{x}) \\
\text{s.t.}~~&\|\mathcal{P}_{\Omega}(\boldsymbol{x}-\boldsymbol{y})\|_{2}\leq \epsilon.
\end{aligned}
\end{equation}

Our LCR model stems from ConvNNM \cite{liu2022recovery, liu2022time}, and it can be solved by the ADMM framework. To resolve the convex optimization problem of LCR in Eq.~\eqref{lcr_opt_prob}, we introduce an auxiliary variable $\boldsymbol{z}$ to preserve the observation information. Thus, the optimization problem becomes
\begin{equation}\label{lcr_opt_prob_v2}
\begin{aligned}
\min_{\boldsymbol{x},\boldsymbol{z}}~&\|\mathcal{C}(\boldsymbol{x})\|_{*}+\gamma\cdot\mathcal{R}_\tau(\boldsymbol{x})+\eta\cdot\pi(\boldsymbol{z}) \\
\text{s.t.}~~&\boldsymbol{x}=\boldsymbol{z},
\end{aligned}
\end{equation}
where $\eta$ is a weight parameter. We define $\pi(\cdot)$ as the reconstructed errors between $\boldsymbol{z}$ and $\boldsymbol{y}$ in the set $\Omega$, which is formally given by $\pi(\boldsymbol{z})=\frac{1}{2}\|\mathcal{P}_{\Omega}(\boldsymbol{z}-\boldsymbol{y})\|_{2}^{2}$. To reinforce both global and local trends in the reconstructed time series $\boldsymbol{x}$, the observation constraint can be related to the noisy version as shown in Eq.~\eqref{lcr_opt_prob}, thus leading to the denoised and smooth time series in $\boldsymbol{x}$. Accordingly, the augmented Lagrangian function of Eq.~\eqref{lcr_opt_prob_v2} can be written as follows,
\begin{equation}\label{augmented_lagrangian_func}
\begin{aligned}
\mathcal{L}(\boldsymbol{x},\boldsymbol{z},\boldsymbol{w})=&\|\mathcal{C}(\boldsymbol{x})\|_{*}+\gamma\cdot\mathcal{R}_\tau(\boldsymbol{x})+\frac{\lambda}{2}\|\boldsymbol{x}-\boldsymbol{z}\|_{2}^{2}\\
&+\langle\boldsymbol{w},\boldsymbol{x}-\boldsymbol{z}\rangle+\eta\cdot\pi(\boldsymbol{z}),
\end{aligned}
\end{equation}
where $\boldsymbol{w}\in\mathbb{R}^{T}$ is the Lagrange multiplier, and $\lambda$ is a hyperparameter. The symbol $\langle\cdot,\cdot\rangle$ denotes the inner product. Note that the constraint $\boldsymbol{x}=\boldsymbol{z}$ in the optimization problem is relaxed by the Lagrange multiplier.

Thus, the ADMM scheme can be summarized as follows,
\begin{equation}\label{admm_scheme}
\begin{cases}
\displaystyle
\boldsymbol{x}:=\argmin_{\boldsymbol{x}}\mathcal{L}(\boldsymbol{x},\boldsymbol{z},\boldsymbol{w}), \\
\displaystyle
\boldsymbol{z}:=\argmin_{\boldsymbol{z}}\mathcal{L}(\boldsymbol{x},\boldsymbol{z},\boldsymbol{w}), \\
\boldsymbol{w}:=\boldsymbol{w}+\lambda(\boldsymbol{x}-\boldsymbol{z}), \\
\end{cases}
\end{equation}
which is a two-block ADMM. Since the objective function of Eq.~\eqref{lcr_opt_prob_v2} is the sum of two separable convex functions, the convergence of LCR can be proved as in \cite{chen2016direct}.



\subsubsection{Estimating the Variable $\boldsymbol{x}$}

In particular, with respect to the variable $\boldsymbol{x}$, we can rewrite the regularization terms in Eq.~\eqref{augmented_lagrangian_func} as follows,
\begin{equation}\label{remaining_regularization_terms}
\begin{aligned}
f=&\frac{\gamma}{2}\|\boldsymbol{\ell}\star\boldsymbol{x}\|_{2}^2+\frac{\lambda}{2}\|\boldsymbol{x}-\boldsymbol{z}+\boldsymbol{w}/\lambda\|_{2}^{2} \\
=&\frac{\gamma}{2T}\|\hat{\boldsymbol{\ell}}\circ\hat{\boldsymbol{x}}\|_{2}^2+\frac{\lambda}{2T}\|\hat{\boldsymbol{x}}-\hat{\boldsymbol{z}}+\hat{\boldsymbol{w}}/\lambda\|_{2}^{2}, \\
\end{aligned}
\end{equation}
where $\hat{\boldsymbol{\ell}}=\mathcal{F}(\boldsymbol{\ell})$, and we introduce the variables $\{\hat{\boldsymbol{x}},\hat{\boldsymbol{z}},\hat{\boldsymbol{w}}\}=\{\mathcal{F}(\boldsymbol{x}),\mathcal{F}(\boldsymbol{z}),\mathcal{F}(\boldsymbol{w})\}$ referring to $\{\boldsymbol{x},\boldsymbol{z},\boldsymbol{w}\}$ in the frequency domain. 
Notably, temporal regularization $\mathcal{R}_{\tau}(\boldsymbol{x})$ can be converted into a DFT copy (see Eq.~\eqref{temporal_regularization_Fourier}), and the Parseval's theorem is also applicable to the remaining term of $f$. 

\begin{lemma}\label{convolution_nuclear_norm}
For any vector $\boldsymbol{x}\in\mathbb{R}^{T}$, the nuclear norm of the resultant circulant matrix $\mathcal{C}(\boldsymbol{x})\in\mathbb{R}^{T\times T}$ is related to the DFT:
\begin{equation}
\|\mathcal{C}(\boldsymbol{x})\|_{*}=\|\mathcal{F}(\boldsymbol{x})\|_{1}.
\end{equation}
\end{lemma}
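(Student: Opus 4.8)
The plan is to exploit the fundamental spectral property that every circulant matrix is diagonalized by the (unitary) discrete Fourier transform, so that its spectrum is given precisely by the DFT of its first column. First I would recall that the normalized DFT matrix $\boldsymbol{U}\in\mathbb{C}^{T\times T}$ is unitary and simultaneously diagonalizes all circulant matrices. Concretely, for an appropriate choice of $\boldsymbol{U}$ one writes $\mathcal{C}(\boldsymbol{x})=\boldsymbol{U}^{*}\boldsymbol{\Lambda}\boldsymbol{U}$ with $\boldsymbol{\Lambda}=\operatorname{diag}(\lambda_1,\ldots,\lambda_T)$, where the eigenvalues $\lambda_k=\sum_{j=1}^{T}x_j\,e^{-2\pi i(j-1)(k-1)/T}$ are exactly the entries of $\mathcal{F}(\boldsymbol{x})$. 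This identity can be verified directly by checking that the columns of $\boldsymbol{U}^{*}$ are eigenvectors of the cyclic shift matrix, and that $\mathcal{C}(\boldsymbol{x})$ is a polynomial in that shift with coefficients $x_1,\ldots,x_T$.

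Next I would observe that this decomposition exhibits $\mathcal{C}(\boldsymbol{x})$ as a unitarily diagonalizable (hence normal) matrix, and invoke the linear-algebra fact that for a normal matrix the singular values coincide with the moduli of the eigenvalues. Concretely, from $\mathcal{C}(\boldsymbol{x})=\boldsymbol{U}^{*}\boldsymbol{\Lambda}\boldsymbol{U}$ and the split $\boldsymbol{\Lambda}=\boldsymbol{P}|\boldsymbol{\Lambda}|$ of the diagonal into a diagonal unitary phase $\boldsymbol{P}$ and the nonnegative diagonal $|\boldsymbol{\Lambda}|=\operatorname{diag}(|\lambda_1|,\ldots,|\lambda_T|)$, one obtains $\mathcal{C}(\boldsymbol{x})=(\boldsymbol{U}^{*}\boldsymbol{P})\,|\boldsymbol{\Lambda}|\,\boldsymbol{U}$, which is a genuine singular value decomposition because both $\boldsymbol{U}^{*}\boldsymbol{P}$ and $\boldsymbol{U}$ are unitary. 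Hence the singular values of $\mathcal{C}(\boldsymbol{x})$ are exactly $|\lambda_1|,\ldots,|\lambda_T|$.

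Finally, summing the singular values gives $\|\mathcal{C}(\boldsymbol{x})\|_{*}=\sum_{k=1}^{T}|\lambda_k|=\sum_{k=1}^{T}\bigl|[\mathcal{F}(\boldsymbol{x})]_k\bigr|=\|\mathcal{F}(\boldsymbol{x})\|_{1}$, which is the claimed identity. I expect the main obstacle to be bookkeeping rather than conceptual: one must pin down the DFT normalization so that the eigenvalues are literally the entries of $\mathcal{F}(\boldsymbol{x})$ as used elsewhere in the paper (consistent, e.g., with the Parseval constant $1/T$ appearing in Eq.~\eqref{temporal_regularization_Fourier}). Reassuringly, since the final quantity depends only on the sum of the moduli $|\lambda_k|$, any reindexing, conjugation, or unitary scaling of the transform leaves the value unchanged, so fixing the convention affects only the presentation and not the validity of the identity.
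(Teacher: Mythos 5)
Your proposal is correct and follows essentially the same route as the paper's proof: both rest on the unitary DFT diagonalization $\mathcal{C}(\boldsymbol{x})=\boldsymbol{U}\,\mathrm{diag}(\mathcal{F}(\boldsymbol{x}))\,\boldsymbol{U}^{H}$ and the observation that the nuclear norm then equals the sum of the moduli of the eigenvalues, i.e., $\|\mathcal{F}(\boldsymbol{x})\|_{1}$. Your explicit phase--modulus factorization $\boldsymbol{\Lambda}=\boldsymbol{P}|\boldsymbol{\Lambda}|$ yielding a genuine SVD simply spells out the step the paper dispatches by unitary invariance of the nuclear norm.
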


\begin{proof}
For any circulant matrix $\mathcal{C}(\boldsymbol{x})$ if and only if it is diagonalizable by the unitary matrix, the eigenvalue decomposition \cite{wright2022high} can be written as follows,
\begin{equation}
\mathcal{C}(\boldsymbol{x})=\boldsymbol{U}\text{diag}(\mathcal{F}(\boldsymbol{x}))\boldsymbol{U}^{H},
\end{equation}
where $\cdot^H$ denotes the conjugate transpose. Since $\boldsymbol{U}$ is a unitary matrix, it always holds that
\begin{equation*}
\begin{aligned}
\|\mathcal{C}(\boldsymbol{x})\|_{*}=&\|\boldsymbol{U}\text{diag}(\mathcal{F}(\boldsymbol{x}))\boldsymbol{U}^{H}\|_{*}
=\|\text{diag}(\mathcal{F}(\boldsymbol{x}))\|_{*}
=\|\mathcal{F}(\boldsymbol{x})\|_{1},
\end{aligned}
\end{equation*}
and we can calculate the singular values of $\mathcal{C}(\boldsymbol{x})$ from the FFT of $\boldsymbol{x}$. Here, FFT is an efficient algorithm for computing the DFT in $\mathcal{O}(T\log T)$ time.
\end{proof}

Going back to the ADMM scheme in Eq.~\eqref{admm_scheme} and using the property of circulant matrix nuclear norm in Lemma~\ref{convolution_nuclear_norm}, the $\boldsymbol{x}$-subproblem 
can be converted into the optimization over the variable $\hat{\boldsymbol{x}}$ in the frequency domain. Thus,
\begin{equation}
\begin{aligned}
\boldsymbol{x}:=\argmin_{\boldsymbol{x}}~&\|\mathcal{C}(\boldsymbol{x})\|_{*}+\frac{\gamma}{2}\|\boldsymbol{\ell}\star\boldsymbol{x}\|_{2}^2+\frac{\lambda}{2}\|\boldsymbol{x}-\boldsymbol{z}+\boldsymbol{w}/\lambda\|_{2}^{2},
\end{aligned}
\end{equation}
is equivalent to
\begin{equation}\label{opt_prob_complex_space_0}
\hat{\boldsymbol{x}}:=\argmin_{\hat{\boldsymbol{x}}}~\|\hat{\boldsymbol{x}}\|_1+\frac{\gamma}{2T}\|\hat{\boldsymbol{\ell}}\circ\hat{\boldsymbol{x}}\|_2^2+\frac{\lambda}{2T}\|\hat{\boldsymbol{x}}-\hat{\boldsymbol{z}}+\hat{\boldsymbol{w}}/\lambda\|_2^2.
\end{equation}

On each $\hat{x}_t$, the optimization problem is given by
\begin{equation}\label{opt_prob_complex_space}
\begin{aligned}
\hat{x}_t:=\argmin_{\hat{x}_t}~&|\hat{x}_t|+\frac{\gamma}{2T}|\hat{\ell}_t\hat{x}_t|^2+\frac{\lambda}{2T}|\hat{x}_t-\hat{z}_t+\hat{w}_t/\lambda|^2 \\
=\argmin_{\hat{x}_t}~&|\hat{x}_t|+\frac{\gamma|\hat{\ell}_t|^2+\lambda}{2T}\Bigl|\hat{x}_t-\frac{\lambda\hat{z}_t-\hat{w}_t}{\gamma|\hat{\ell}_t|^2+\lambda}\Bigr|^2, \\
\end{aligned}
\end{equation}
where $|\hat{\ell}_t\hat{x}_t|^2=|\hat{\ell}_t|^2\cdot|\hat{x}_t|^2$.

The resultant $\ell_1$-norm minimization is memory-efficient, easy to compute, and preserves the singular values of circulant matrix that are due to the FFT. The closely related analysis and results are also discussed in \cite{yamamoto2022fast, liu2022recovery, liu2022time}.

According to Eqs.~\eqref{opt_prob_complex_space_0} and \eqref{opt_prob_complex_space}, we let
\begin{equation}\label{update_h}
\hat{\boldsymbol{h}}\triangleq(\lambda\hat{\boldsymbol{z}}-\hat{\boldsymbol{w}})\oslash(\gamma\hat{\boldsymbol{\ell}}^{*}\circ\hat{\boldsymbol{\ell}}+\lambda\mathbbm{1}_{T}),
\end{equation}
where $\cdot^{*}$ represents the complex conjugate, and $\oslash$ denotes the Hadamard division. $\mathbbm{1}_{T}\in\mathbb{R}^{T}$ is the vector of ones.

The closed-form solution to $\hat{\boldsymbol{x}}$ can be found in Lemma~\ref{L1_norm_complex_space}. As we have the closed-form solution as described in Eq.~\eqref{prox_L1_norm_complex} (i.e., with respect to each $\hat{x}_t$) such that
\begin{equation}\label{L1_norm_thresholding}
\hat{x}_t:=\frac{\hat{h}_{t}}{|\hat{h}_t|}\cdot\max\{0,|\hat{h}_t|-1/\delta_t\},
\end{equation}
with
\begin{equation}
\left\{
\begin{aligned}
\delta_t&\triangleq(\gamma|\hat{\ell}_t|^2+\lambda)/T, \\
\hat{h}_t&\triangleq(\lambda\hat{z}_t-\hat{w}_t)/(\gamma|\hat{\ell}_t|^2+\lambda), \\
\end{aligned}
\right.
\end{equation}
we can therefore update the variable $\boldsymbol{x}$ by
\begin{equation}\label{update_x}
\boldsymbol{x}:=\mathcal{F}^{-1}(\hat{\boldsymbol{x}}).
\end{equation}

\begin{lemma}\label{L1_norm_complex_space}
Following Eqs.~\eqref{opt_prob_complex_space_0} and \eqref{opt_prob_complex_space}, for any $\ell_1$-norm minimization problem in complex space such that
\begin{equation}
\min_{\hat{\boldsymbol{x}}}~\|\hat{\boldsymbol{x}}\|_{1}+\frac{\delta}{2}\|\hat{\boldsymbol{x}}-\hat{\boldsymbol{h}}\|_{2}^{2},
\end{equation}
with complex-valued vectors $\hat{\boldsymbol{x}},\hat{\boldsymbol{h}}\in\mathbb{C}^T$ and weight parameter $\delta\in\mathbb{R}$, element-wise, the solution is given by
\begin{equation}\label{prox_L1_norm_complex}
\hat{x}_t:=\frac{\hat{h}_{t}}{|\hat{h}_t|}\cdot\max\{0,|\hat{h}_t|-1/\delta\},t=1,\ldots,T.
\end{equation}
\end{lemma}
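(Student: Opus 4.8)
The plan is to exploit the full separability of the objective and reduce the problem to $T$ independent scalar minimizations over $\mathbb{C}$, each of which is the proximal operator of the complex modulus. Since $\|\hat{\boldsymbol{x}}\|_{1}=\sum_{t=1}^{T}|\hat{x}_t|$ and $\|\hat{\boldsymbol{x}}-\hat{\boldsymbol{h}}\|_{2}^{2}=\sum_{t=1}^{T}|\hat{x}_t-\hat{h}_t|^{2}$, the objective decouples across the index $t$, so it suffices to solve
\begin{equation}
\min_{\xi\in\mathbb{C}}~|\xi|+\frac{\lambda}{2T}|\xi-\eta|^{2}
\end{equation}
for each fixed $\eta=\hat{h}_t$ and then set $\hat{x}_t$ to the minimizer.

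First I would dispatch the phase. Writing $\xi=re^{\mathrm{i}\theta}$ and $\eta=\rho e^{\mathrm{i}\phi}$ with $r,\rho\geq0$, note that $|\xi|=r$ depends only on the modulus, while $|\xi-\eta|^{2}=r^{2}-2r\rho\cos(\theta-\phi)+\rho^{2}$ is minimized, for fixed $r$, exactly when $\theta=\phi$. Hence every minimizer must be phase-aligned with $\eta$, i.e. $\xi=r\,\eta/|\eta|$ whenever $\eta\neq0$; the degenerate case $\eta=0$ is treated separately below.

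Next I would solve the resulting one-dimensional real problem over the modulus $r\geq0$, namely $\min_{r\geq0}\,r+\frac{\lambda}{2T}(r-\rho)^{2}$. This is strictly convex, so I would combine the first-order optimality condition with the nonnegativity constraint: the unconstrained stationary point is $r^{\star}=\rho-T/\lambda$, which is feasible precisely when $\rho\geq T/\lambda$, and otherwise the constrained minimizer is $r=0$. Merging both cases gives $r=\max\{0,\rho-T/\lambda\}=\max\{0,|\eta|-T/\lambda\}$, and reattaching the phase yields the claimed $\xi=\frac{\eta}{|\eta|}\max\{0,|\eta|-T/\lambda\}$. Running over $t=1,\dots,T$ then reproduces Eq.~\eqref{prox_L1_norm_complex}.

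The main obstacle will be the nonsmoothness of $|\cdot|$ at the origin, which invalidates the naive derivative argument there and leaves $\eta/|\eta|$ undefined in the edge case $\eta=0$. The cleanest remedy is to argue via the subdifferential of the convex modulus---identifying $\mathbb{C}\cong\mathbb{R}^{2}$---so that $\mathbf{0}$ is optimal exactly when $\frac{\lambda}{T}|\eta|\leq1$, i.e. $|\eta|\leq T/\lambda$; in that regime $\max\{0,|\eta|-T/\lambda\}=0$, so the formula returns $0$ regardless of the undefined phase factor and the statement holds by convention. Away from the origin the objective is smooth and the stationarity computation above is rigorous, so the subdifferential argument need only be invoked to pin down the thresholding boundary.
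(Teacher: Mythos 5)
Your proof is correct, but it is worth noting that the paper itself does not actually prove this lemma: its ``proof'' is a one-line deferral to the complex shrinkage (soft-thresholding) operator established in the cited references (Yang and Zhang's fast $\ell_1$ solvers and the ConvNNM/CircNNM line of work), where the operator $\hat{h}\mapsto\frac{\hat{h}}{|\hat{h}|}\max\{0,|\hat{h}|-\alpha\}$ is derived as the proximal map of the complex modulus. You instead give a self-contained derivation: separability of the objective across coordinates, a phase-alignment argument showing any minimizer of the scalar problem must share the phase of $\hat{h}_t$, an explicit solution of the resulting one-dimensional strictly convex problem over the modulus $r\geq 0$ (yielding the threshold $T/\lambda$, which matches the lemma's constant since the quadratic carries the weight $\frac{\lambda}{2T}$), and a subdifferential argument over $\mathbb{C}\cong\mathbb{R}^2$ to handle the nonsmooth point and the degenerate case $\hat{h}_t=0$, where the formula returns $0$ by convention. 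This is exactly the argument the cited works use to establish the shrinkage operator, so your route is not conceptually different from what the paper implicitly relies on; what it buys is rigor and completeness within the paper itself, including the edge case at the origin that citation-only proofs leave implicit, at the cost of a page of elementary convex analysis the authors chose to outsource.
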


\begin{proof}
In theory, Lemma~\ref{L1_norm_complex_space} invokes the shrinkage operator in \cite{yang2009fast, liu2012robust, liu2022recovery}.
\end{proof}

\subsubsection{Estimating the Variable $\boldsymbol{z}$}

In the ADMM scheme (see Eq.~\eqref{admm_scheme}), the subproblem with respect to the variable $\boldsymbol{z}$ can be written as follows,
\begin{equation}
\min_{\boldsymbol{z}}~\frac{\lambda}{2}\|\boldsymbol{x}-\boldsymbol{z}-\boldsymbol{w}/\lambda\|_{2}^{2}+\frac{\eta}{2}\|\mathcal{P}_{\Omega}(\boldsymbol{z}-\boldsymbol{y})\|_{2}^{2}.
\end{equation}
Let $g$ be the objective function, then the partial derivative with respect to $\boldsymbol{z}$ can be formed by $\mathcal{P}_{\Omega}(\boldsymbol{z})$ and $\mathcal{P}_{\Omega}^\perp(\boldsymbol{z})$:
\begin{equation}
\left\{
\begin{aligned}
{\partial g}/{\partial\mathcal{P}_{\Omega}(\boldsymbol{z})}=&\lambda\mathcal{P}_{\Omega}(\boldsymbol{z}-\boldsymbol{x}-\boldsymbol{w}/\lambda)+\eta\mathcal{P}_{\Omega}(\boldsymbol{z}-\boldsymbol{y}), \\
{\partial g}/{\partial\mathcal{P}_{\Omega}^\perp(\boldsymbol{z})}=&\lambda\mathcal{P}_{\Omega}^\perp(\boldsymbol{z}-\boldsymbol{x}-\boldsymbol{w}/\lambda).
\end{aligned}\right.
\end{equation}

As a result, ${\partial g}/{\partial \boldsymbol{z}}=\boldsymbol{0}$ produces a closed-form solution: 
\begin{equation}\label{update_z}
\begin{aligned}
\boldsymbol{z}:&=\Bigl\{\boldsymbol{z}\mid {\partial g}/{\partial\mathcal{P}_{\Omega}(\boldsymbol{z})}+{\partial g}/{\partial\mathcal{P}_{\Omega}^\perp(\boldsymbol{z})}=\boldsymbol{0}\Bigr\} \\
&=\frac{1}{\lambda+\eta}\mathcal{P}_{\Omega}(\lambda\boldsymbol{x}+\boldsymbol{w}+\eta\boldsymbol{y})+\frac{1}{\lambda}\mathcal{P}_{\Omega}^\perp(\lambda\boldsymbol{x}+\boldsymbol{w}).
\end{aligned}
\end{equation}
In this case, if $\eta\to+\infty$, then the solution refers to the LCR with strong observation constraint in Eq.~\eqref{lcr_opt_prob_v1}. In terms of the parameter $\eta$, we can preferably set its value to $\eta=c\cdot\lambda$ with $c\in\{10^2,10^3\}$ to preserve the observation information.

\subsubsection{Solution Algorithm}

As mentioned above, our LCR model reinforces the modeling processes of global low-rank and local temporal trends in time series data. Since we utilize the circulant matrix and circular convolution, it is not hard to show the appealing properties of DFT and lead to an elegant and fast solution algorithm. Algorithm~\ref{univariate_algorithm} summarizes the implementation of the proposed LCR model.

\begin{algorithm}
\caption{Laplacian Convolutional Representation (LCR)}
\label{univariate_algorithm}
\begin{algorithmic}[1]
\renewcommand{\algorithmicrequire}{\textbf{Input:}}
\renewcommand{\algorithmicensure}{\textbf{Output:}}
\REQUIRE Data $\boldsymbol{y}\in\mathbb{R}^{T}$ with observed index set $\Omega$, Laplacian kernel size $\tau\in\mathbb{Z}^{+}$, and hyperparameters $\{\gamma,\lambda,\eta\}$.
\ENSURE Reconstructed vector $\boldsymbol{x}\in\mathbb{R}^{T}$.
\STATE Initialize $\{\boldsymbol{x}_{0},\boldsymbol{z}_{0},\boldsymbol{w}_{0}\}$.
\STATE Construct the Laplacian kernel $\boldsymbol{\ell}$ with $\tau$ and perform FFT on it to get $\hat{\boldsymbol{\ell}}$.
\FOR {$i=0$ to maximum iteration}
\STATE Perform FFT on $\{\boldsymbol{z}_{i},\boldsymbol{w}_{i}\}$.
\STATE Compute $\hat{\boldsymbol{h}}$ by Eq.~\eqref{update_h}.
\STATE Compute $\hat{\boldsymbol{x}}$ by the shrinkage in Eq.~\eqref{L1_norm_thresholding}.
\STATE Compute $\boldsymbol{x}_{i+1}$ by $\boldsymbol{x}_{i+1}=\mathcal{F}^{-1}(\hat{\boldsymbol{x}})$ (see Eq.~\eqref{update_x}).
\STATE Compute $\boldsymbol{z}_{i+1}$ by Eq.~\eqref{update_z}.
\STATE Compute $\boldsymbol{w}_{i+1}=\boldsymbol{w}_{i}+\lambda(\boldsymbol{x}_{i+1}-\boldsymbol{z}_{i+1})$ (see Eq.~\eqref{admm_scheme}).
\ENDFOR
\end{algorithmic}
\end{algorithm}

To analyze the empirical time complexity of LCR (with 50 iterations by default), Fig.~\ref{empirical_time_complexity_curve} shows the running times of LCR on the generated data with different data lengths (i.e., data $\boldsymbol{y}\in\mathbb{R}^{T}$ with $T\in\{2^{10},2^{11},\ldots,2^{20}\}$). As shown in Fig.~\ref{empirical_time_complexity_curve_convnnm_vs_lcr}, we compare LCR with ConvNNM (e.g., kernel size $\tilde{\tau}=2^4$ in this case), demonstrating that LCR is more efficient than ConvNNM. Typically, ConvNNM can be converted into a standard nuclear norm minimization with singular value thresholding (i.e., of time complexity $\mathcal{O}(\tilde{\tau}^2T)$) \cite{cai2010singular, candes2010matrix, candes2012exact, liu2022recovery, liu2022time}. The computational cost of ConvNNM would increase with a larger $\tilde{\tau}$ for the convolution matrix $\mathcal{C}_{\tilde{\tau}}(\boldsymbol{y})\in\mathbb{R}^{T\times\tilde{\tau}}$. In contrast, both CircNNM and LCR have an efficient solution through FFT in $\mathcal{O}(T\log T)$ time. 

\begin{figure}[!ht]
\centering
\subfigure[ConvNNM vs. LCR]{
\centering
\includegraphics[width = 0.247\textwidth]{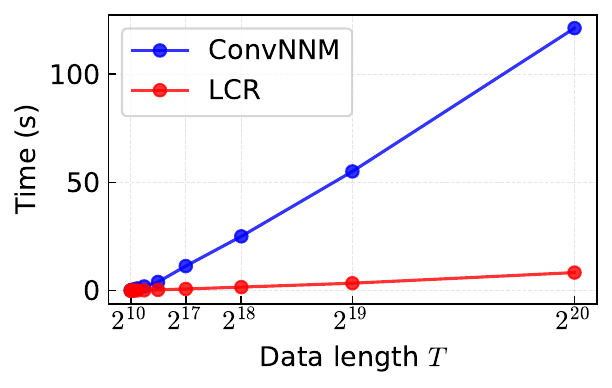}\label{empirical_time_complexity_curve_convnnm_vs_lcr}
}\hspace{-1.1em}
\subfigure[LCR]{
\centering
\includegraphics[width=0.232\textwidth]{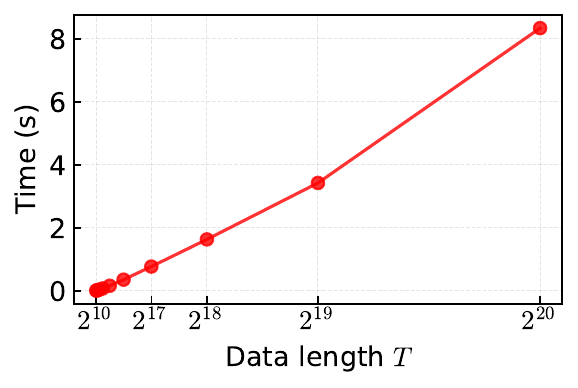}\label{empirical_time_complexity_curve_lcr}
}
\caption{Empirical time complexity. The model is tested 50 times on each generated data.}
\label{empirical_time_complexity_curve}
\end{figure}

\subsection{Multivariate Time Series Imputation}

\subsubsection{Problem Definition}

Considering both the spatial and temporal dimensions in traffic data, we have a multivariate time series imputation task as described in Problem~\ref{multivariate_imputation_problem}. The critical question is how to characterize both spatial and temporal dependencies of traffic time series data in the modeling process.

\begin{problem}[Multivariate Time Series Imputation]\label{multivariate_imputation_problem}
For any partially observed time series $\boldsymbol{Y}\in\mathbb{R}^{N\times T}$ with $N$ variables and $T$ time steps, if its observed index set is denoted by $\Omega$, then the goal is to impute the missing data $\mathcal{P}_{\Omega}^\perp(\boldsymbol{Y})$ from $\mathcal{P}_{\Omega}(\boldsymbol{Y})$. Herein, $\mathcal{P}_{\Omega}:\mathbb{R}^{N\times T}\to\mathbb{R}^{N\times T}$ and $\mathcal{P}_{\Omega}^\perp:\mathbb{R}^{N\times T}\to\mathbb{R}^{N\times T}$ denote the orthogonal projection supported on $\Omega$ and the complement of $\Omega$, respectively.
\end{problem}

\subsubsection{Model Description}

In this study, we introduce the nuclear norm of the circulant tensor $\mathcal{C}(\boldsymbol{X})$ in Definition~\ref{circulant_tensor_nuclear_norm_def}, which follows the tensor nuclear norm proposed in \cite{semerci2014tensor, lu2019tensor}. 

\begin{definition}[Circulant Tensor Nuclear Norm]\label{circulant_tensor_nuclear_norm_def}
For any matrix $\boldsymbol{X}\in\mathbb{R}^{N\times T}$, the corresponding circulant tensor is $\mathcal{C}(\boldsymbol{X})\in\mathbb{R}^{N\times N\times T\times T}$, which can be factorized in the Tucker format (i.e., higher-order singular value decomposition (SVD)) \cite{kolda2009tensor}:
\begin{equation}
\mathcal{C}(\boldsymbol{X})=\boldsymbol{\mathcal{S}}\times_1\boldsymbol{U}_1\times_2\boldsymbol{U}_1\times_3\boldsymbol{U}_2\times_4\boldsymbol{U}_2,
\end{equation}
where $\boldsymbol{\mathcal{S}}\in\mathbb{R}^{N\times N\times T\times T}$ is the core tensor (consisting of singular values \cite{semerci2014tensor, lu2019tensor}), while $\boldsymbol{U}_1\in\mathbb{R}^{N\times N}$ and $\boldsymbol{U}_2\in\mathbb{R}^{T\times T}$ are unitary matrices. The notation $\times_k,\,\forall k\in\{1,2,3,4\}$ represents the mode-$k$ product between tensor and matrix \cite{kolda2009tensor}. The circulant tensor nuclear norm is defined as
\begin{equation}
\|\mathcal{C}(\boldsymbol{X})\|_{*}=\sum_{n=1}^{N}\sum_{t=1}^{T}s_{n,n,t,t},
\end{equation}
where $s_{n,n,t,t}$ is the $(n,n,t,t)$-th entry of the core tensor $\boldsymbol{\mathcal{S}}$.
\end{definition}


Recall that the temporal regularization in the univariate LCR model works on each time series independently. To jointly characterize the spatial and temporal dependencies for traffic time series, we consider a separable kernel in the LCR model, namely, $\boldsymbol{K}\triangleq\boldsymbol{\ell}_{s}\boldsymbol{\ell}^\top\in\mathbb{R}^{N\times T}$ with the spatial kernel $\boldsymbol{\ell}_{s}=(1,0,\cdots,0)^\top\in\mathbb{R}^{N}$ (i.e., the first column of the $N$-by-$N$ identity matrix). In the case of spatial modeling, $\boldsymbol{\ell}_s$ can also be introduced as the Laplacian kernel in Definition~\ref{laplacian_kernel_def}. The optimization problem of two-dimensional LCR (LCR-2D) can be formulated as follows,


\begin{equation}\label{opt_prob_LCR_2D}
\begin{aligned}
\min_{\boldsymbol{X}}~&\|\mathcal{C}(\boldsymbol{X})\|_{*}+\frac{\gamma}{2}\|\boldsymbol{K}\star\boldsymbol{X}\|_{F}^{2} \\
\text{s.t.}~~&\|\mathcal{P}_{\Omega}(\boldsymbol{X}-\boldsymbol{Y})\|_{F}\leq\epsilon,
\end{aligned}
\end{equation}
where $\|\cdot\|_F$ denotes the Frobenius norm of a matrix. The two-dimensional circular convolution is described in Definition~\ref{2d_circular_conv}. Although $\boldsymbol{\ell}_{s}=(1,0,\cdots,0)^\top\in\mathbb{R}^{N}$ does not provide any spatial dependencies, the nuclear norm of circulant operator on $\boldsymbol{X}$ can achieve implicit spatial modeling. 
If applicable, one can characterize spatial correlations by using the Laplacian kernel.

\begin{definition}[Two-Dimensional Circular Convolution \cite{brigham1988fast, hansen2006deblurring}]\label{2d_circular_conv}
For any matrices $\boldsymbol{X}\in\mathbb{R}^{N\times T}$ and $\boldsymbol{K}\in\mathbb{R}^{\nu_1\times\nu_2}$ with $\nu_1\leq N,\nu_2\leq T$, the circular convolution of two matrices is defined as
\begin{equation}
\boldsymbol{Z}=\boldsymbol{K}\star\boldsymbol{X}\in\mathbb{R}^{N\times T},
\end{equation}
or element-wise,
\begin{equation}
z_{n,t}=\sum_{i=1}^{\nu_1}\sum_{j=1}^{\nu_2}\kappa_{i,j}x_{n-i+1,t-j+1},
\end{equation}
where $n=1,2,\ldots,N$ and $t=1,2,\ldots,T$. $\kappa_{i,j}$ is the $(i,j)$-th entry of $\boldsymbol{K}$. 
\end{definition}

\begin{remark}
In the field of signal processing, the two-dimensional circular convolution also possesses the properties that are associated with two-dimensional DFT. According to the convolution theorem and the Parseval's theorem, we have
\begin{equation}
\|\boldsymbol{K}\star\boldsymbol{X}\|_{F}^{2}=\frac{1}{NT}\|\mathcal{F}(\boldsymbol{K})\circ\mathcal{F}(\boldsymbol{X})\|_{F}^{2},
\end{equation}
where $\mathcal{F}(\cdot)$ denotes the two-dimensional DFT. Typically, two-dimensional DFT can be computed by first transforming each column vector (or row vector) and then each row vector (or column vector) of the matrix \cite{hansen2006deblurring}.
\end{remark}

\subsubsection{ADMM Scheme and Solution Algorithm}

As mentioned above, LCR-2D in the multivariate case is a convex problem that can be resolved by the ADMM. Following Eq.~\eqref{admm_scheme}, the ADMM scheme is given by
\begin{equation}\label{2d_admm_scheme}
\left\{
\begin{aligned}
\boldsymbol{X}:&=\argmin_{\boldsymbol{X}}\mathcal{L}(\boldsymbol{X},\boldsymbol{Z},\boldsymbol{W}), \\
\displaystyle
\boldsymbol{Z}:&=\argmin_{\boldsymbol{Z}}\mathcal{L}(\boldsymbol{X},\boldsymbol{Z},\boldsymbol{W}) \\
&=\frac{1}{\lambda+\eta}\mathcal{P}_{\Omega}(\lambda\boldsymbol{X}+\boldsymbol{W}+\eta\boldsymbol{Y})+\frac{1}{\lambda}\mathcal{P}_{\Omega}^\perp(\lambda\boldsymbol{X}+\boldsymbol{W}), \\
\boldsymbol{W}:&=\boldsymbol{W}+\lambda(\boldsymbol{X}-\boldsymbol{Z}),
\end{aligned}\right.
\end{equation}
where $\mathcal{L}(\boldsymbol{X},\boldsymbol{Z},\boldsymbol{W})$ is the augmented Lagrangian function:
\begin{equation}
\begin{aligned}
\mathcal{L}(\boldsymbol{X},\boldsymbol{Z},\boldsymbol{W})=&\|\mathcal{C}(\boldsymbol{X})\|_{*}+\frac{\gamma}{2}\|\boldsymbol{K}\star\boldsymbol{X}\|_{F}^{2}\\
&+\frac{\lambda}{2}\|\boldsymbol{X}-\boldsymbol{Z}\|_{F}^{2}+\langle\boldsymbol{W},\boldsymbol{X}-\boldsymbol{Z}\rangle \\
&+\frac{\eta}{2}\|\mathcal{P}_{\Omega}(\boldsymbol{Z}-\boldsymbol{Y})\|_{F}^{2}.
\end{aligned}
\end{equation}


Although the nuclear norm of circulant tensor in Definition~\ref{circulant_tensor_nuclear_norm_def} is more complicated than the nuclear norm of circulant matrix (see Lemma~\ref{convolution_nuclear_norm}), two-dimensional DFT allows one to find the solution to $\boldsymbol{X}$, if not mentioning the difficulty of obtaining a unique decomposition of circulant tensor. In the frequency domain, it takes
\begin{equation}
\begin{aligned}
\hat{\boldsymbol{X}}:=\argmin_{\hat{\boldsymbol{X}}}~&\|\hat{\boldsymbol{X}}\|_{1}+\frac{\gamma}{2NT}\|\hat{\boldsymbol{K}}\circ\hat{\boldsymbol{X}}\|_{F}^{2}\\
&+\frac{\lambda}{2NT}\|\hat{\boldsymbol{X}}-\hat{\boldsymbol{Z}}+\hat{\boldsymbol{W}}/\lambda\|_{F}^{2},
\end{aligned}
\end{equation}
where $\{\hat{\boldsymbol{K}},\hat{\boldsymbol{X}},\hat{\boldsymbol{Z}},\hat{\boldsymbol{W}}\}$ refers to $\{{\boldsymbol{K}},{\boldsymbol{X}},{\boldsymbol{Z}},{\boldsymbol{W}}\}$ in the frequency domain. Without loss of generality, this subproblem for $\ell_1$-norm minimization in complex space can also be solved by the shrinkage operator in Eq.~\eqref{prox_L1_norm_complex}. Although Definition~\ref{circulant_tensor_nuclear_norm_def} describes the formula of higher-order SVD that leads to the nuclear norm, the computation of the circulant tensor nuclear norm minimization is actually converted into an $\ell_1$-norm minimization with two-dimensional FFT. Algorithm~\ref{LCR_2D} summarizes the whole scheme of LCR-2D.

\begin{algorithm}
\caption{Two-Dimensional Laplacian Convolutional Representation (LCR-2D)}
\label{LCR_2D}
\begin{algorithmic}[1]
\renewcommand{\algorithmicrequire}{\textbf{Input:}}
\renewcommand{\algorithmicensure}{\textbf{Output:}}
\REQUIRE Data $\boldsymbol{Y}\in\mathbb{R}^{N\times T}$ with observed index set $\Omega$, Laplacian kernel size $\tau\in\mathbb{Z}^{+}$, and hyperparameters $\{\gamma,\lambda,\eta\}$.
\ENSURE Reconstructed matrix $\boldsymbol{X}\in\mathbb{R}^{N\times T}$.
\STATE Initialize $\{\boldsymbol{X}_{0},\boldsymbol{Z}_{0},\boldsymbol{W}_{0}\}$.
\STATE Construct the Laplacian kernel $\boldsymbol{\ell}\in\mathbb{R}^{T}$ with $\tau$. 
\STATE Construct the spatial kernel $\boldsymbol{\ell}_s=(1,0,\cdots,0)\in\mathbb{R}^N$ (or $\boldsymbol{\ell}_s\in\mathbb{R}^{N}$ with $\tau_s$) and build up a separable kernel $\boldsymbol{K}\triangleq\boldsymbol{\ell}_s\boldsymbol{\ell}^\top$.
\FOR {$i=0$ to maximum iteration}
\STATE Perform FFT on $\{\boldsymbol{Z}_{i},\boldsymbol{W}_{i}\}$.
\STATE Compute $\hat{\boldsymbol{X}}$ by referring to the shrinkage in Eq.~\eqref{prox_L1_norm_complex}.
\STATE Compute $\boldsymbol{X}_{i+1}$ by $\boldsymbol{X}_{i+1}=\mathcal{F}^{-1}(\hat{\boldsymbol{X}})$.
\STATE Compute $\boldsymbol{Z}_{i+1}$ by Eq.~\eqref{2d_admm_scheme}.
\STATE Compute $\boldsymbol{W}_{i+1}$ by Eq.~\eqref{2d_admm_scheme}.
\ENDFOR
\end{algorithmic}
\end{algorithm}

\section{Univariate Traffic Time Series Imputation}\label{univariate_experiment}

This section evaluates the reconstruction of univariate time series from partial observations with LCR. We focus on understanding how well the reconstructed time series preserve the global and local trends in the imputation task. Experiments are conducted on both traffic speed time series (weak periodicity and strong noises) and traffic volume time series (strong periodicity) collected through dual-loop detectors on the highway network of Portland, USA.\footnote{\url{https://portal.its.pdx.edu/}} Specifically, the datasets include: (\textbf{Traffic speed}) The speed observations with 15-min time resolution (i.e., 96 expected data samples per day) over three days (i.e., of length 288). (\textbf{Traffic volume}) The volume observations have the same time resolution as the traffic speed. In particular, we consider the imputation scenarios on fully observed, 20\%, 10\%, and 5\% observed data, respectively. To generate partially observed data, we randomly mask a certain number of data as missing values. Since both traffic speed and volume time series show cyclical patterns, the start data points and the end data points can be connected by circulant matrix and circular convolution without flipping operation in Remark~\ref{flip_vec_remark}.

In the following experiments, to evaluate the imputation performance, we use the mean absolute percentage error (MAPE) and the root mean square error (RMSE):
\begin{equation*}
\begin{aligned}
\text{MAPE}=\frac{1}{n}\sum_{i=1}^{n}\frac{|y_i-\hat{y}_i|}{y_i},\quad
 \text{RMSE}=\sqrt{\frac{1}{n}\sum_{i=1}^{n}(y_i-\hat{y}_i)^2},
\end{aligned}
\end{equation*}
where $n$ is the total number of estimated values, and $y_i$ and $\hat{y}_i$ are the actual value and its estimation, respectively.

\subsection{Traffic Speed}


\begin{figure*}[!ht]
\centering
\subfigure[Fully observed.]{
    \centering
    \includegraphics[width = 0.315\textwidth]{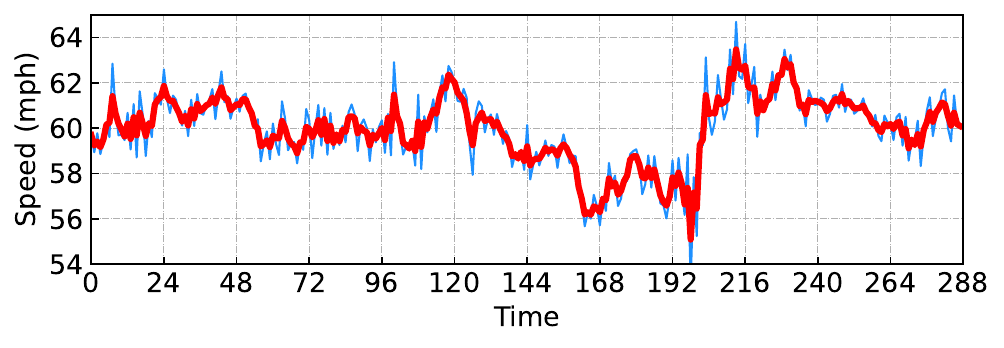}\label{speeds_0}
}
\subfigure[80\% missing values. Here, $\text{MAPE} = 1.42\%$.]{
    \centering
    \includegraphics[width = 0.315\textwidth]{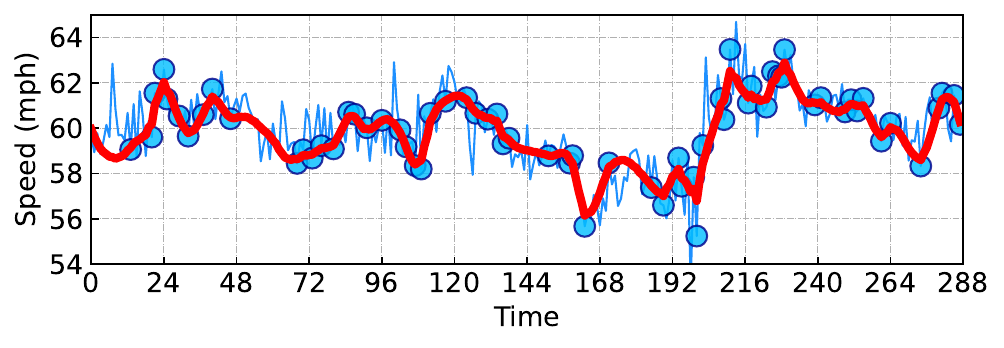}\label{speeds_80}
}
\subfigure[90\% missing values. Here, $\text{MAPE} = 1.69\%$.]{
    \centering
    \includegraphics[width = 0.315\textwidth]{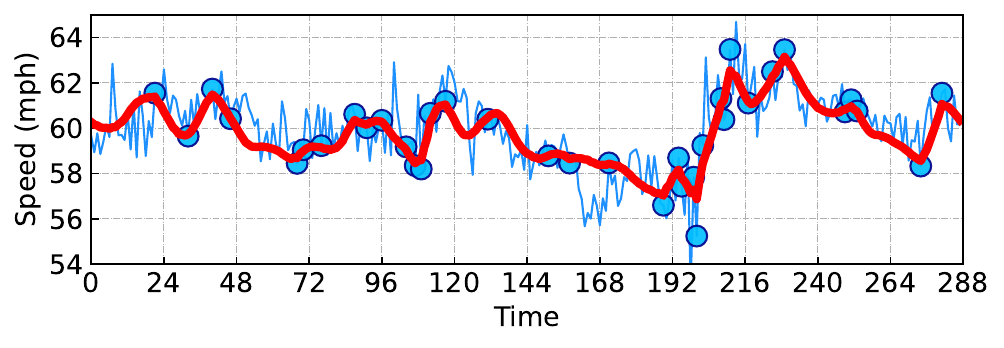}\label{speeds_90}
}
\caption{Univariate traffic time series imputation on the freeway traffic speed time series. The blue curve represents the ground truth time series, while the red curve refers to the reconstructed time series produced by LCR. Here, partial observations are illustrated as blue circles.}
\label{univariate_speed_laplacian_imputation}
\end{figure*}

Fig.~\ref{speeds_0} demonstrates the reconstructed time series by LCR on the fully observed time series, in which the data noises can be smoothed out due to the existence of temporal regularization and the relaxation of observation constraint $\|\mathcal{P}_{\Omega}(\boldsymbol{x}-\boldsymbol{y})\|_{2}\leq \epsilon$ in which $\boldsymbol{y}$ and $\boldsymbol{x}$ are the partial observations and the reconstructed time series, respectively. Fig.~\ref{speeds_80} and \ref{speeds_90} show the imputation performance by LCR with partial observations. Of these results, the reconstructed time series can accurately approximate both partial observations and missing values while preserving the trends of the ground truth time series.

\begin{figure}[!ht]
\centering
\subfigure[CircNNM (red curve). Here, $\text{MAPE} = 2.47\%$.]{
    \centering
    \includegraphics[width = 0.35\textwidth]{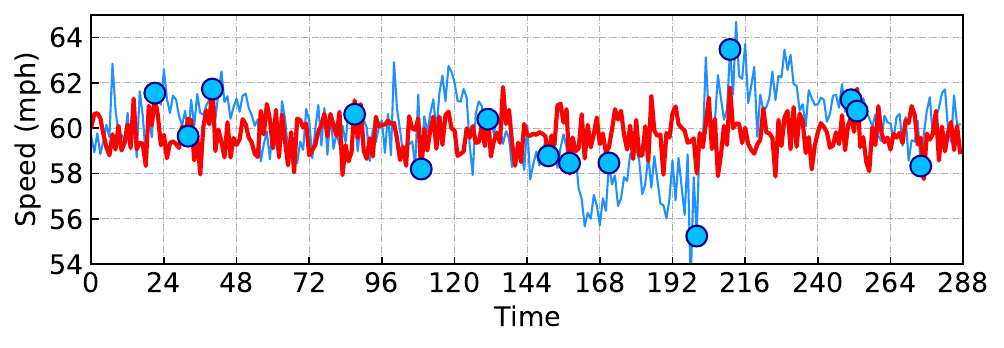}
}
\subfigure[ConvNNM (red curve). Here, $\text{MAPE} = 2.33\%$.]{
    \centering
    \includegraphics[width = 0.35\textwidth]{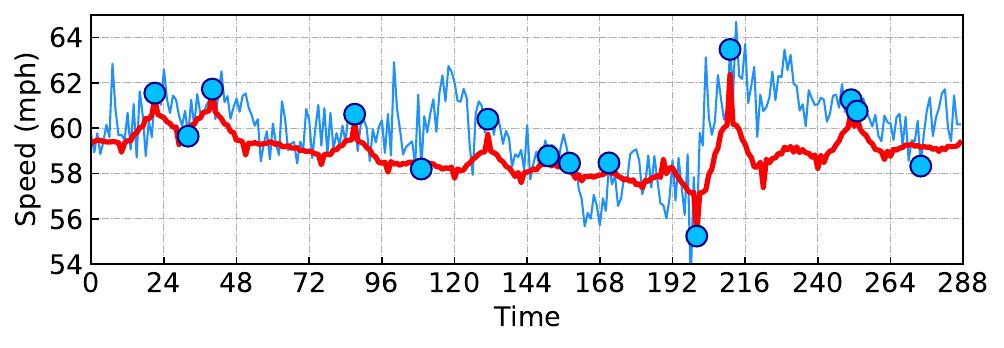}
}
\subfigure[ConvNNM+ (red curve). Here, $\text{MAPE} = 2.30\%$.]{
    \centering
    \includegraphics[width = 0.35\textwidth]{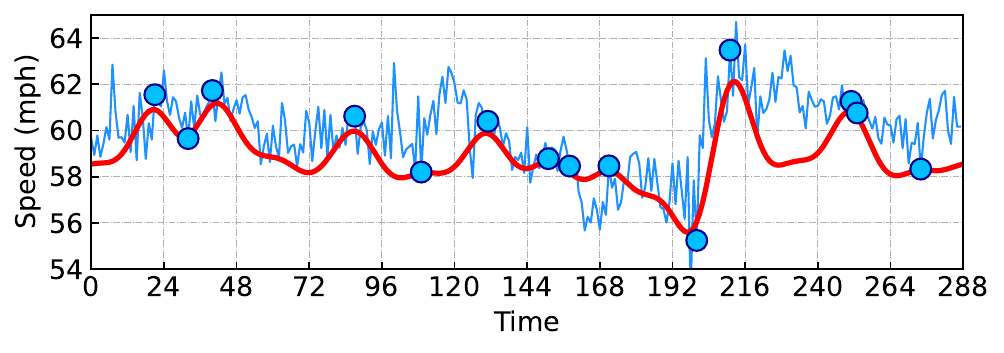}
}
\subfigure[LCR (red curve) with $\tau=2,\gamma=5\lambda$. Here, $\text{MAPE} = 2.13\%$.]{
    \centering
    \includegraphics[width = 0.35\textwidth]{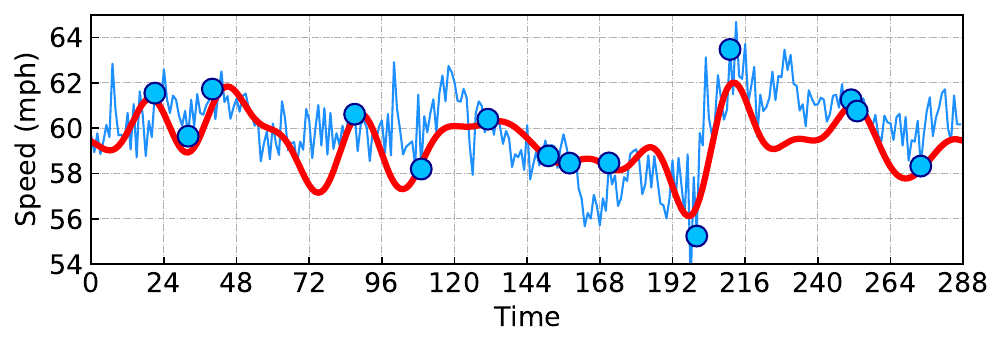}\label{speeds_tau_2_gamma_2}
}
\caption{Univariate traffic time series imputation on the freeway traffic speed time series. In this case, we mask 95\% observations as missing values and only have 14 speed observations for training the model.}
\label{univariate_speed_imputation}
\end{figure}

Next, we test a more challenging scenario in which we aim to reconstruct 95\% missing values from 5\% observations (i.e., reconstructing 274 missing values from only 14 data samples). We compare the time series imputation of LCR with the following baseline models: (i) CircNNM \cite{liu2022recovery} (equivalent to LCR without temporal regularization), (ii) ConvNNM \cite{liu2022recovery}, and (iii) ConvNNM+
(i.e., ConvNNM with temporal regularization) such that
\begin{equation}
\begin{aligned}
\min_{\boldsymbol{x}}~&\|\mathcal{C}_{\tilde{\tau}}(\boldsymbol{x})\|_{*}+\frac{\gamma}{2}\|\boldsymbol{\ell}\star\boldsymbol{x}\|_{2}^{2} \\
\text{s.t.}~~&\|\mathcal{P}_{\Omega}(\boldsymbol{x}-\boldsymbol{y})\|_{2}\leq\epsilon.
\end{aligned}
\end{equation}

Essentially, comparing the proposed LCR model with these baseline models allows one to highlight 1) the importance of global and local trends modeling in LCR, and 2) the fast implementation of LCR via the use of FFT. The imputation results on the given time series are shown in Fig.~\ref{univariate_speed_imputation}, we can summarize the following findings: 
\begin{itemize}
\item The reconstructed time series by CircNNM shows high fluctuations due to the lack of local trend modeling. As a result, the reconstructed time series fails to reproduce trends of the ground truth at such a high missing rate.
\item ConvNNM performs better than CircNNM. The reconstructed time series fits the observed values well, but the trend does not match the ground truth perfectly. Unlike CircNNM, ConvNNM cannot employ a fast implementation via FFT. As a result, ConvNNM is not well-suited to large problems (also see Fig.~\ref{empirical_time_complexity_curve}).
\item ConvNNM+ outperforms ConvNNM. This demonstrates the significance of Laplacian kernel for time series modeling. However, both ConvNNM and ConvNNM+ require implementing the singular value thresholding on $T$-by-$\tilde{\tau}$ convolution matrices.
\end{itemize}

As shown in Fig.~\ref{speeds_tau_2_gamma_2}, the reconstructed time series by LCR demonstrates consistent global and local trends with the trends of ground truth time series, and our LCR model clearly outperforms the baseline models. By comparing CircNNM with LCR, the imputation results emphasize the importance of temporal regularization $\mathcal{R}_{\tau}(\boldsymbol{x})$.

\subsection{Traffic Volume}

\begin{figure*}[!ht]
\centering
\subfigure[CircNNM. Here, $\text{MAPE} = 36.31\%$.]{
    \centering
    \includegraphics[width = 0.3\textwidth]{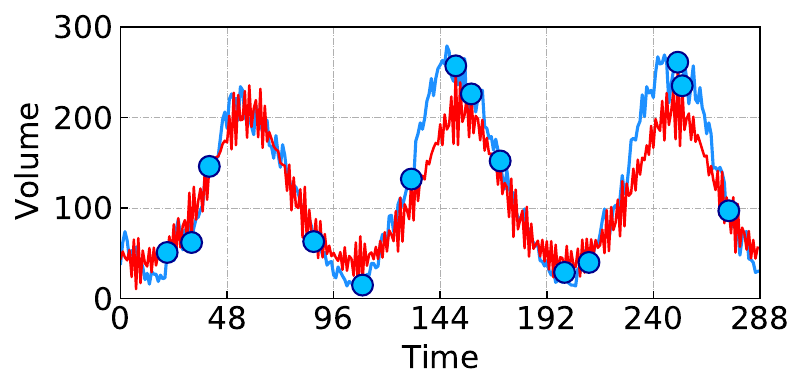}
}
\subfigure[ConvNNM. Here, $\text{MAPE} = 33.18\%$.]{
    \centering
    \includegraphics[width = 0.3\textwidth]{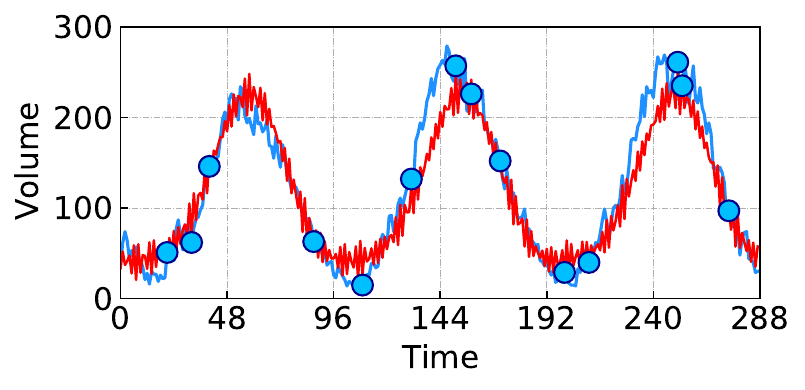}
}
\subfigure[LCR with $\tau=2,\gamma=5\lambda$. Here, $\text{MAPE} = 19.59\%$.]{
    \centering
    \includegraphics[width = 0.3\textwidth]{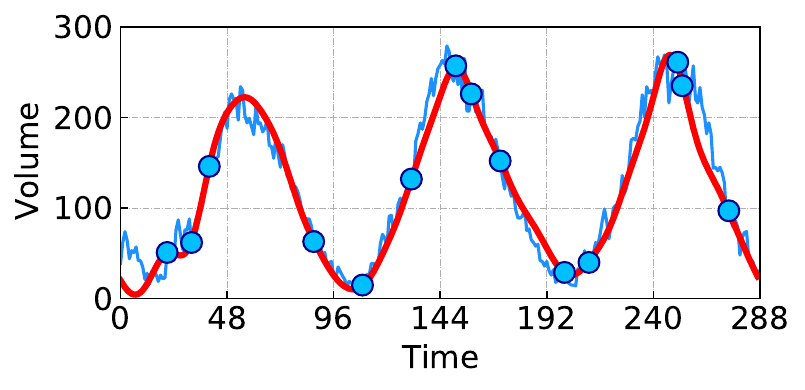}
}
\caption{Univariate time series imputation on the freeway traffic volume time series. In this case, we randomly remove 95\% observations as missing values, and we only have 14 volume observations for the reconstruction.}
\label{univariate_volume_imputation}
\end{figure*}

In Fig.~\ref{univariate_volume_imputation}, the traffic volume time series is characterized by a strong daily rhythm. The time series shows relatively low traffic volume at night, and the traffic volume reaches a peak during rush hours, implying typical travelers' behavioral rhythms. In this case, the time series possesses strong seasonality and three remarkable peaks over three days.

The task is reconstructing 95\% missing values from 5\% observations. As can be seen, due to the lack of explicit temporal modeling, both CircNNM and ConvNNM models cannot produce time series as smooth as LCR. Observing the reconstructed time series, it is clear that LCR produces more accurate reconstruction results than both CircNNM and ConvNNM. Yet, in contrast to the traffic speed time series, due to the strong seasonality (i.e., daily rhythm in traffic flow) in the traffic volume time series, both CircNNM and ConvNNM can produce reasonable time series, which seems to be consistent with the results in \cite{liu2022time}.

\section{Multivariate Traffic Time Series Imputation}\label{multivariate_experiment}

In this section, we consider some real-world multivariate traffic time series imputation scenarios, including speed field reconstruction of vehicular traffic flow and large-scale traffic speed data imputation. The experiments and evaluation of LCR are expected to demonstrate the efficiency of global and local trends modeling on traffic time series with spatiotemporal settings. 

\subsection{Speed Field Reconstruction}

Speed field reconstruction is a critical problem in vehicular traffic flow modeling as the data collection process is often far from ideal \cite{treiber2013traffic}. The task is to reconstruct the traffic speed distribution along a road segment during a period using the motion states (i.e., time, location, and speed) of a small portion of vehicles \cite{de2008traffic}, e.g., to estimate the traffic congestion by using only the motion states of taxis.

\subsubsection{Datasets}

Traffic states such as movement speeds measure the status of vehicular traffic flow, which can be extracted from high-resolution vehicle trajectories \cite{treiber2013traffic}. In the literature, there are several classical traffic flow datasets, such as HighD\footnote{Highway Drone (HighD) dataset is available at \url{https://www.highd-dataset.com/}.} and CitySim\footnote{A Drone-Based Vehicle Trajectory Dataset for Safety-Oriented Research and Digital Twins (CitySim) is available at \url{https://github.com/ozheng1993/UCF-SST-CitySim-Dataset}.} collected through drones across a fraction of roads \cite{krajewski2018highd,zheng_2023citysim}. The drones can gather the movement trajectories of vehicles for driving behavior modeling and traffic state analysis. In terms of the HighD dataset, there are 60 video recordings available from several German highway sections, and each covers trajectory data on the 420-meter road segment with multiple lanes. We select the video \#46 for experiments and the resultant speed field tensor is of size $142\times 595\times 3$, i.e., $142$ discrete locations (3-meter spatial resolution), $595$ time steps (2-second time resolution), and $3$ lanes, see Fig.~\ref{speed_matrix_full_46_lane1}, \ref{speed_matrix_full_46_lane2}, and \ref{speed_matrix_full_46_lane3}. In terms of the CitySim dataset, it contains video recordings of vehicular trajectory data collected from various road infrastructures, including freeways, expressways, and intersections. We select the trajectory data of one freeway for experiments. The resultant speed field tensor is of size $126\times 442\times 3$, i.e., 126 discrete locations (5-meter spatial resolution), 442 time steps (2-second time resolution), and 3 lanes, see Fig.~\ref{speed_matrix_full_lane1}, \ref{speed_matrix_full_lane2}, and \ref{speed_matrix_full_lane3}. In the experiment, we randomly mask a certain fraction of trajectories (e.g., 30\%, 50\%, and 70\%) and then construct the speed field that shows partial observations.

\subsubsection{Baseline Models}

For comparison, we consider some low-rank matrix/tensor completion models, including LRMC \cite{cai2010singular}, Hankel tensor factorization (HTF \cite{yokota2018missing} in the form of CP factorization), High-accuracy low-rank tensor completion (HaLRTC \cite{liu2013tensor}), and low-rank tensor completion with truncated nuclear norm (LRTC-TNN \cite{chen2020nonconvex}). To highlight the advantage of LCR-2D, we also choose the following baseline models:
\begin{itemize}
\item LCR$_N$: We consider to implement LCR over $N$ univariate time series problems independently, i.e.,
\begin{equation}
\begin{aligned}
\min_{\boldsymbol{X}}~&\sum_{n=1}^{N}\|\mathcal{C}(\boldsymbol{x}_{n})\|_{*}+\frac{\gamma}{2}\sum_{n=1}^{N}\|\boldsymbol{\ell}\star\boldsymbol{x}_{n}\|_{2}^{2} \\
\text{s.t.}~~&\|\mathcal{P}_{\Omega}(\boldsymbol{X}-\boldsymbol{Y})\|_{F}\leq\epsilon, \\
\end{aligned}
\end{equation}
where $\boldsymbol{x}_n\in\mathbb{R}^{T},\,n=1,2,\ldots, N$ are the univariate time series of $\boldsymbol{X}\in\mathbb{R}^{N\times T}$.
\item CTNNM: Circulant tensor nuclear norm minimization whose objective function is specified as $\|\mathcal{C}(\boldsymbol{X})\|_{*}$ (i.e., the special case of LCR-2D without regularization term).
\item Quadratic variation completion (QVC) such that
\begin{equation}
\begin{aligned}
\min_{\boldsymbol{X}}~&\frac{\gamma}{2}\operatorname{tr}(\boldsymbol{X}\tilde{\boldsymbol{L}}\boldsymbol{X}^\top) \\
\text{s.t.}~~&\|\mathcal{P}_{\Omega}(\boldsymbol{X}-\boldsymbol{Y})\|_{F}\leq\epsilon,
\end{aligned}
\end{equation}
with the Laplacian matrix $\tilde{\boldsymbol{L}}$ referring to Eq.~\eqref{QV_laplacian_matrix}, and $\operatorname{tr}(\cdot)$ denotes the trace of matrix. In the meanwhile, we let the spatial kernel be $\ell_s=(1,0,\cdots,0,0)\in\mathbb{R}^N$ and the temporal kernel be parameterized by $\tau$ in Definition~\ref{laplacian_kernel_def}, referring to Laplacian kernelized completion (LKC, i.e., the special case of LCR-2D without circulant tensor nuclear norm).
\end{itemize}

\subsubsection{Model Setting}

To eliminate correlations between the start data points and the end data points in our model, we introduce a $2N$-by-$2T$ block matrix (i.e., with four blocks) which flips the original matrix $\boldsymbol{Y}$, see Fig.~\ref{flip_matrix}. This matrix can be regarded as the input into the LCR-2D model. The model results are constructed by averaging the blocks according to the flipping operation.



\begin{figure}[ht!]
\centering
\includegraphics[width=0.31\textwidth]{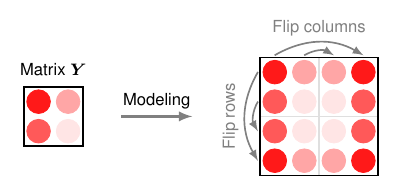}
\caption{Constructing the matrix that flips the original matrix $\boldsymbol{Y}$ along rows and columns simultaneously. This operation can prevent the LCR-2D model from misleading values on the border rows and columns.}
\label{flip_matrix}
\end{figure}

In this case, we intend to highlight the importance of global/local trend modeling on the speed field. Since each speed field dataset includes lane-level trajectories, we evaluate LCR-2D on each lane-level speed field independently. The default hyperparameter $\eta$ is set as $10^2\lambda$. On the HighD dataset, we set $\lambda=10^{-3}NT$ and $\gamma=\lambda$. We validate the kernel size as $\tau=1$ on the 30\%-trajectory speed field and $\tau=2$ on the 50\%-/70\%-trajectory speed field. On the CitySim dataset, we set $\lambda=10^{-4}NT$, $\gamma=\lambda$, and $\tau=3$ after the numerical evaluation.

\subsubsection{Results}

\begin{table*}[!ht]
\caption{Reconstruction performance (in MAPE (\%)/RMSE) achieved by LCR-2D and baseline models on both speed field datasets. Note that the best results are emphasized in bold fonts, and we underline the second-best results.}
\label{LCR_reconstruction_results}
\centering
\begin{tabular}{l|c|ccccccccc}
\toprule
Dataset & Rate & LCR-2D & LCR$_N$ & CTNNM & QVC & LKC & LRMC \cite{cai2010singular} & HTF \cite{yokota2018missing} & HaLRTC \cite{liu2013tensor} & LRTC-TNN \cite{chen2020nonconvex} \\
\midrule
\multirow{3}{*}{HighD} 
& 30\% & \textbf{3.57}/\textbf{1.41} & 6.79/4.09 & 3.91/\underline{1.46} & 6.79/4.59 & 6.62/4.60 & 7.11/4.61 & 4.38/1.66 & 4.63/1.78 & \underline{3.89}/1.50 \\
& 50\% & \textbf{4.06}/\textbf{1.52} & 6.88/3.49 & 4.57/\underline{1.61} & 7.02/3.97 & 6.06/3.88 & 9.37/5.30 & 4.67/1.63 & 5.63/2.06 & \underline{4.53}/1.63 \\
& 70\% & \textbf{4.73}/\textbf{1.77} & 7.91/3.32 & \underline{5.61}/\underline{1.87} & 8.12/4.03 & 6.52/3.87 & 16.79/8.91 & 5.76/2.17 & 7.94/3.01 & 5.75/2.03 \\
\midrule
\multirow{3}{*}{CitySim} 
& 30\% & \textbf{8.88}/\textbf{2.71} & 10.88/4.24 & \underline{9.23}/2.81 & 10.97/4.30 & 10.90/4.24 & 11.27/4.43 & 9.28/\underline{2.78} & 10.91/4.31 & 10.90/4.26 \\
& 50\% & \textbf{9.08}/\textbf{2.69} & 10.56/3.89 & 9.59/\underline{2.82} & 10.65/3.96 & 10.65/3.92 & 11.42/4.29 & \underline{9.49}/2.83 & 10.66/3.99 & 10.49/3.87 \\
& 70\% & \textbf{9.07}/\textbf{2.66} & 10.35/3.67 & \underline{9.42}/\underline{2.73} & 10.33/3.70 & 10.53/3.71 & 12.96/4.97 & 9.93/2.94 & 11.02/3.92 & 10.46/3.71 \\
\bottomrule
\end{tabular}
\end{table*}

\begin{figure*}[ht!]
\centering
\subfigure[Original speed field (lane \#1).]{
\centering
\includegraphics[width = 0.31\textwidth]{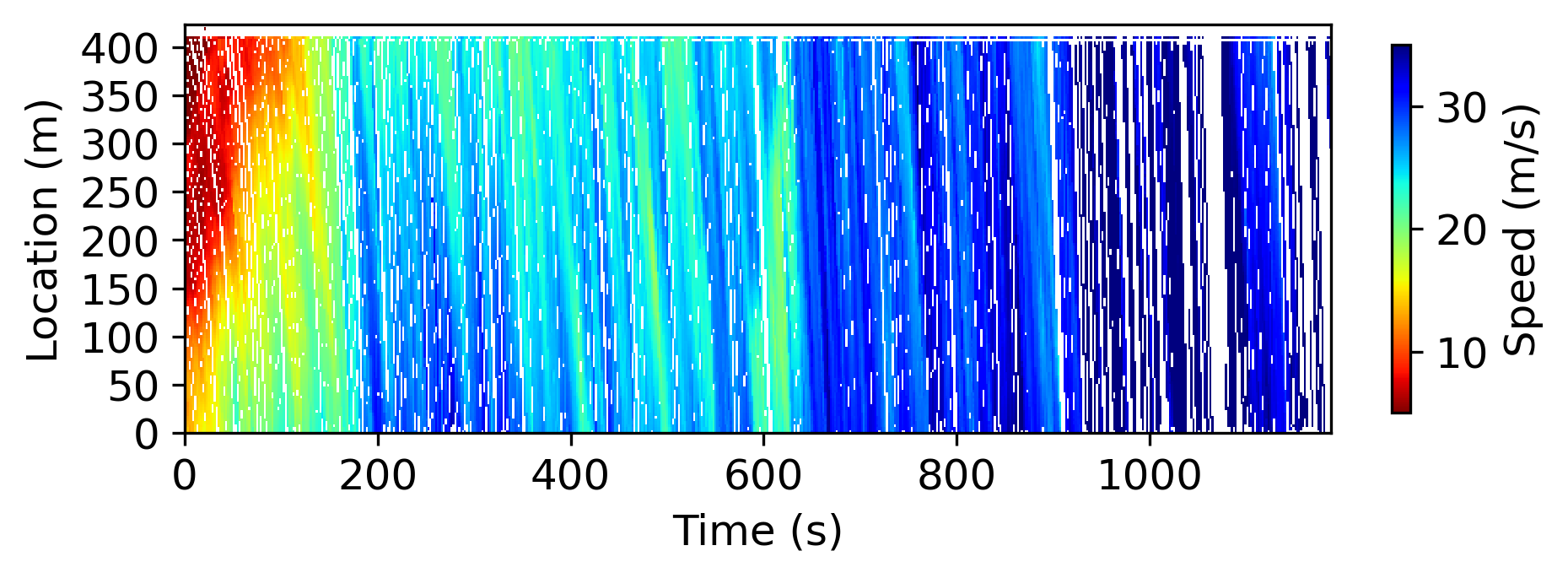}\label{speed_matrix_full_46_lane1}
}
\subfigure[Incomplete speed field (lane \#1).]{
\centering
\includegraphics[width = 0.31\textwidth]{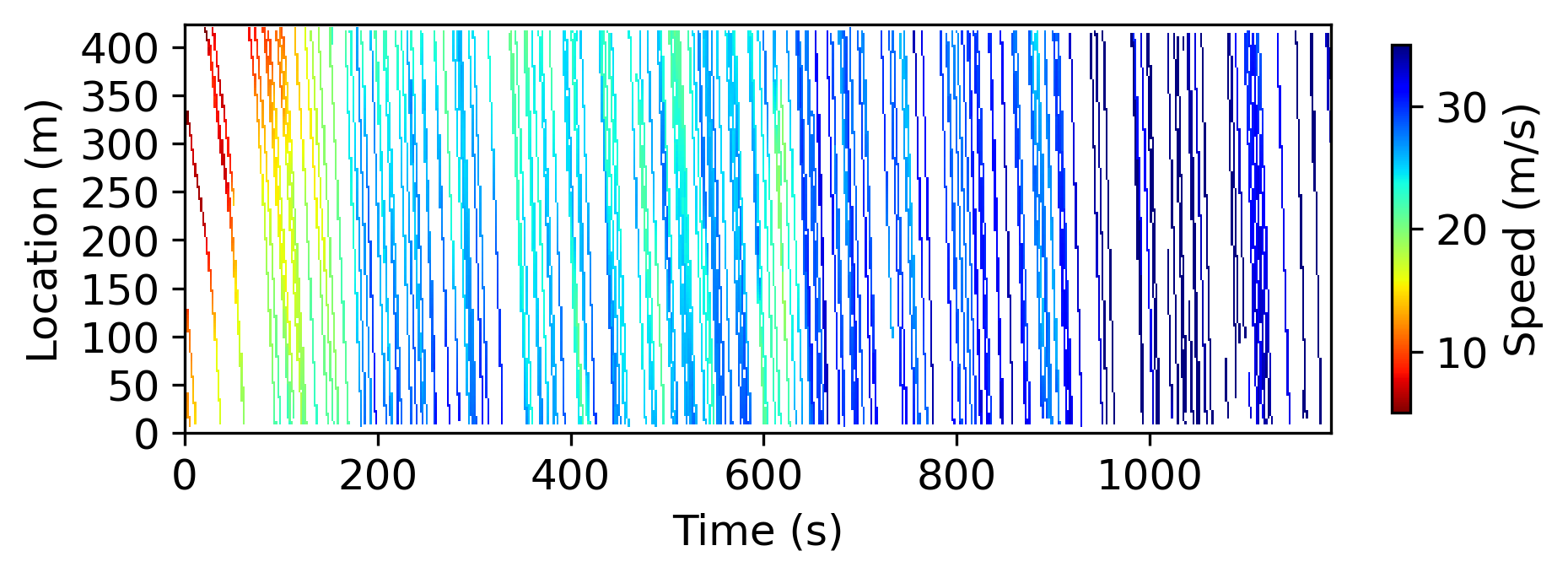}\label{speed_matrix_70_46_lane1}
}
\subfigure[Reconstructed speed field (lane \#1).]{
\centering
\includegraphics[width = 0.31\textwidth]{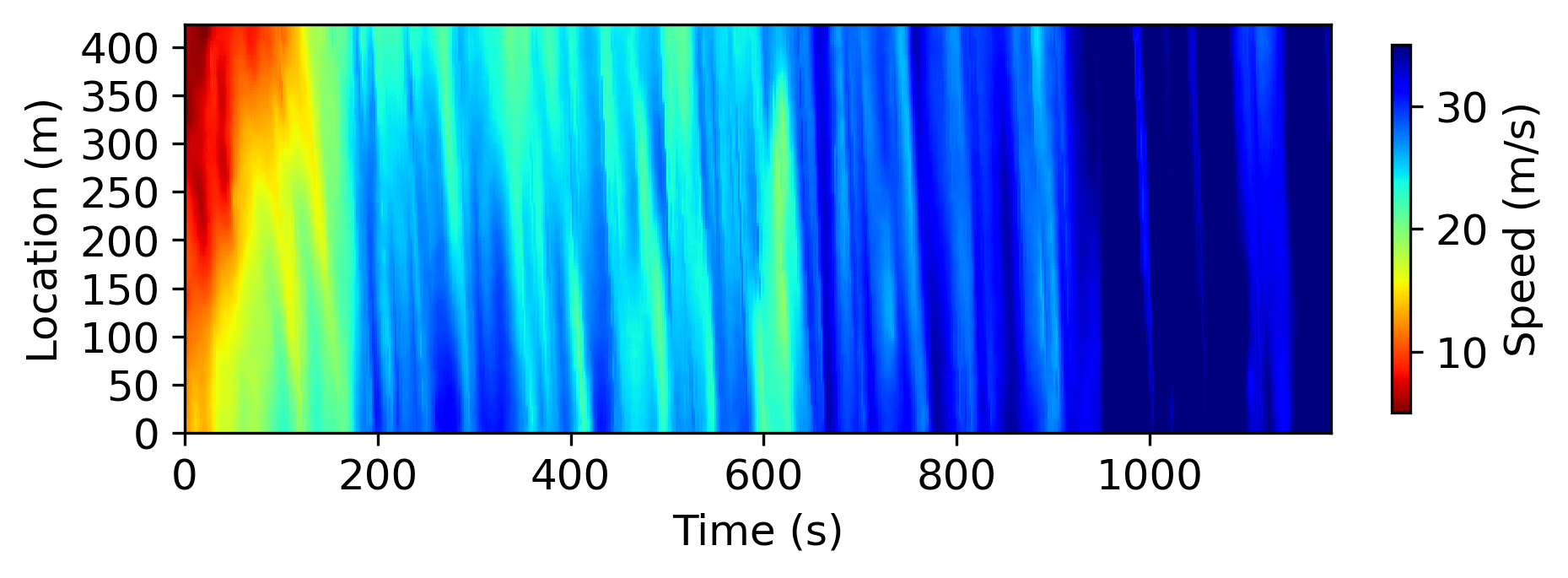}\label{HighD_speed_field_70_LCR_2D_rec_lane1}
}
\subfigure[Original speed field (lane \#2).]{
\centering
\includegraphics[width = 0.31\textwidth]{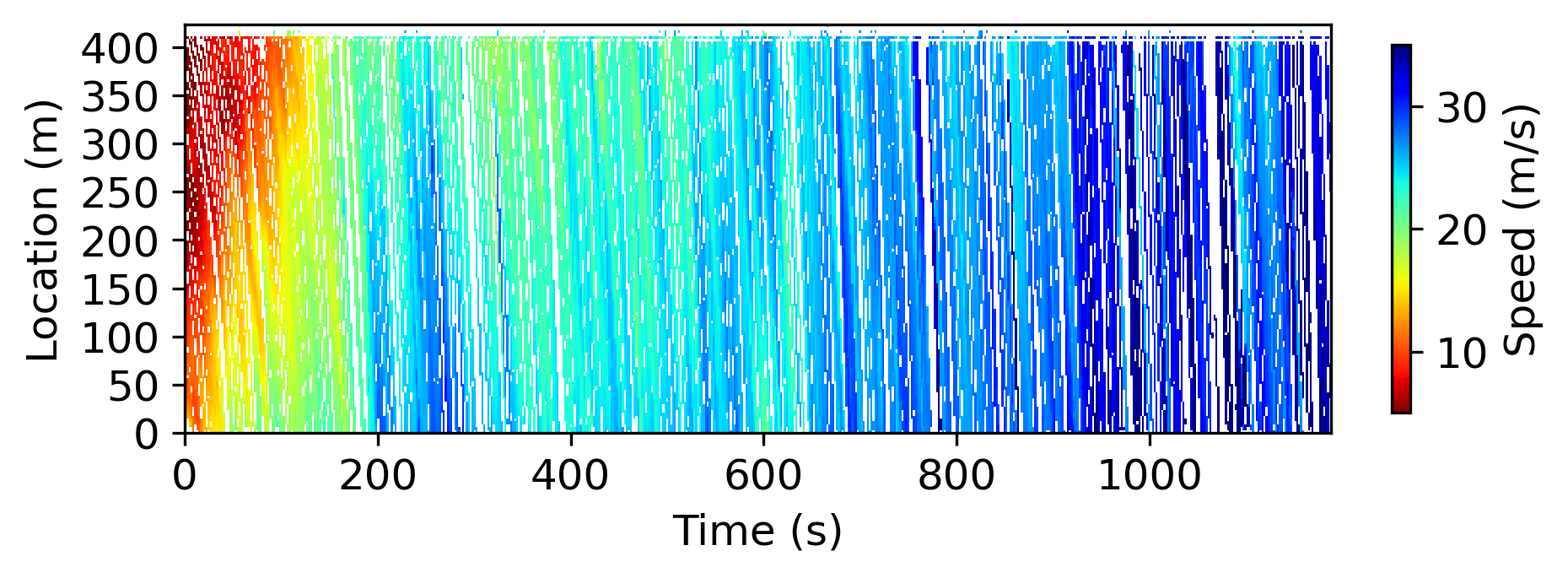}\label{speed_matrix_full_46_lane2}
}
\subfigure[Incomplete speed field (lane \#2).]{
\centering
\includegraphics[width = 0.31\textwidth]{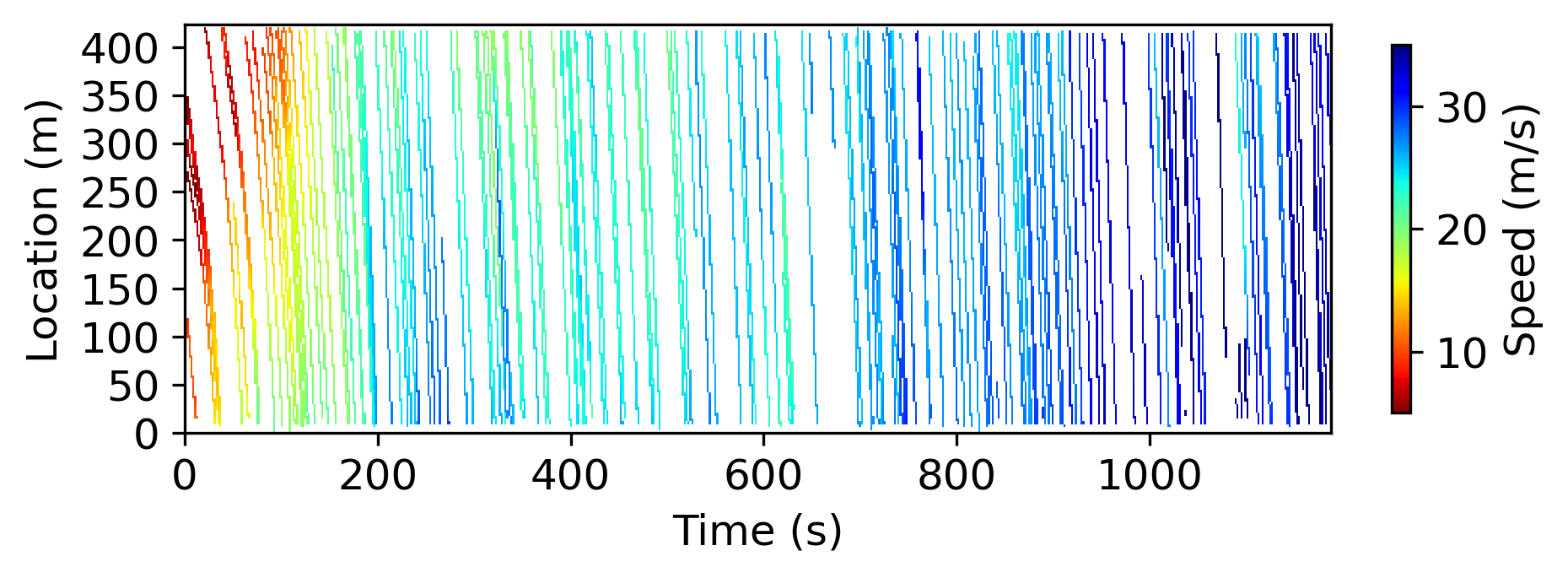}\label{speed_matrix_70_46_lane2}
}
\subfigure[Reconstructed speed field (lane \#2).]{
\centering
\includegraphics[width = 0.31\textwidth]{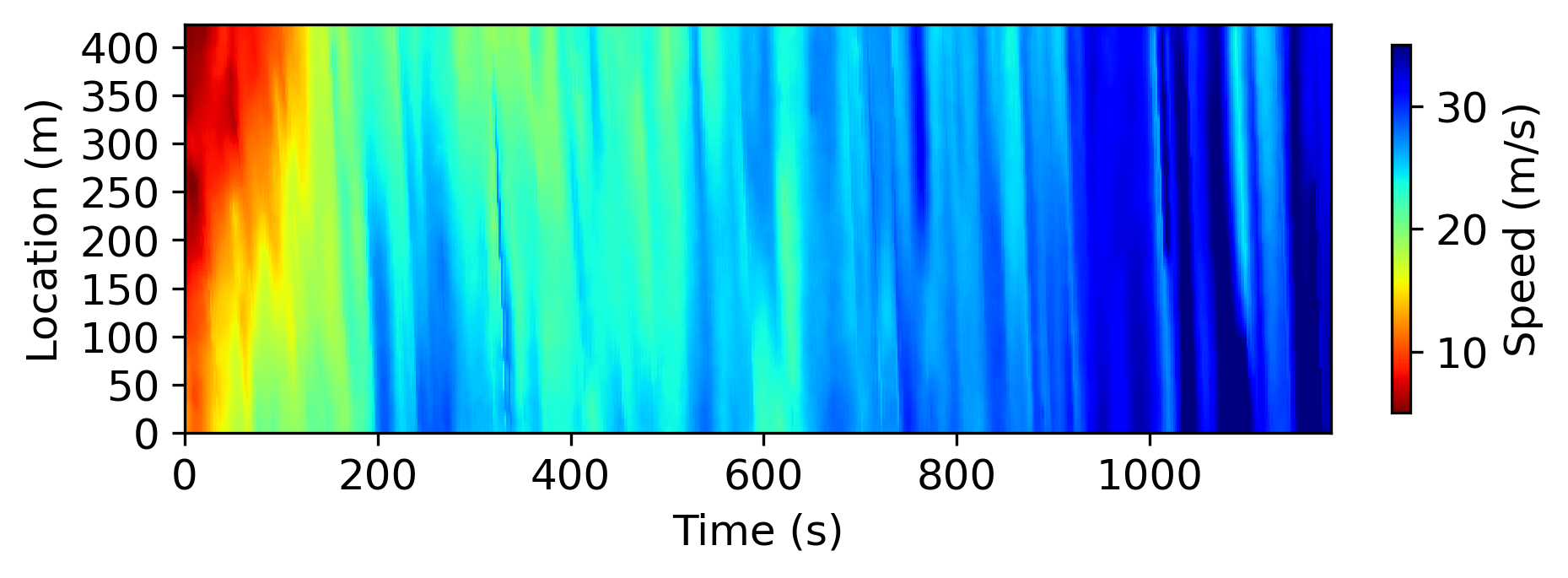}\label{HighD_speed_field_70_LCR_2D_rec_lane2}
}
\subfigure[Original speed field (lane \#3).]{
\centering
\includegraphics[width = 0.31\textwidth]{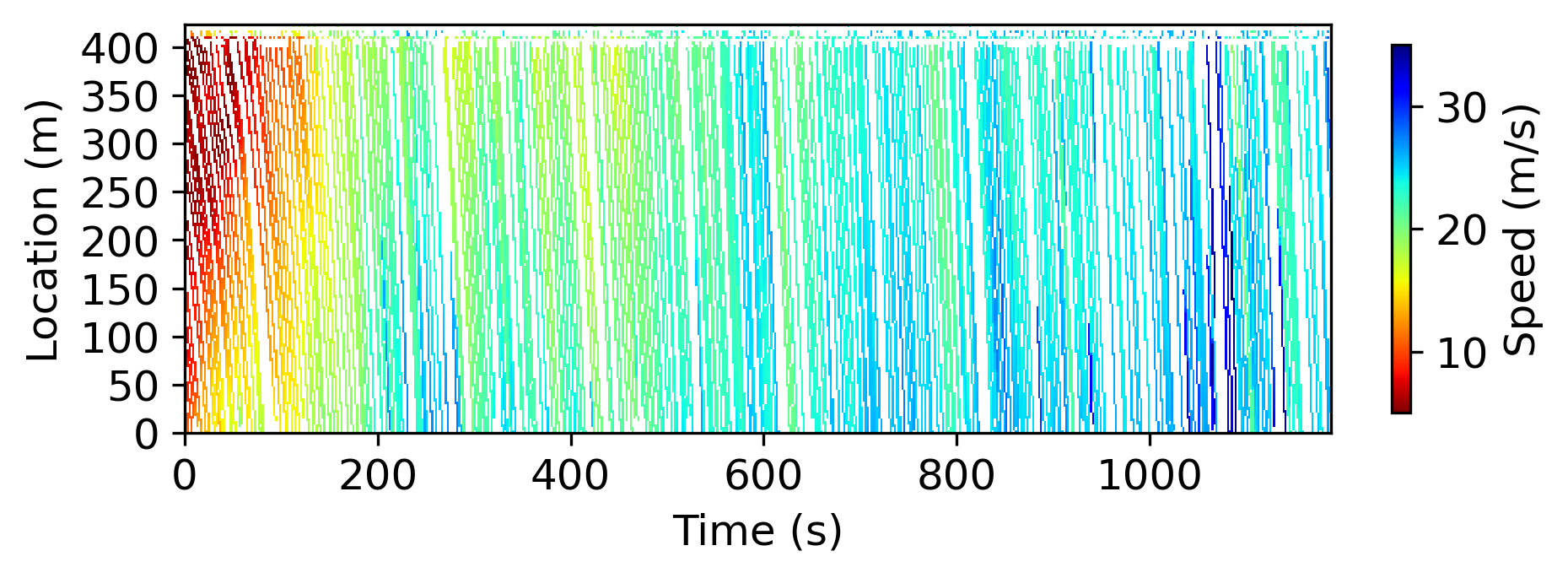}\label{speed_matrix_full_46_lane3}
}
\subfigure[Incomplete speed field (lane \#3).]{
\centering
\includegraphics[width = 0.31\textwidth]{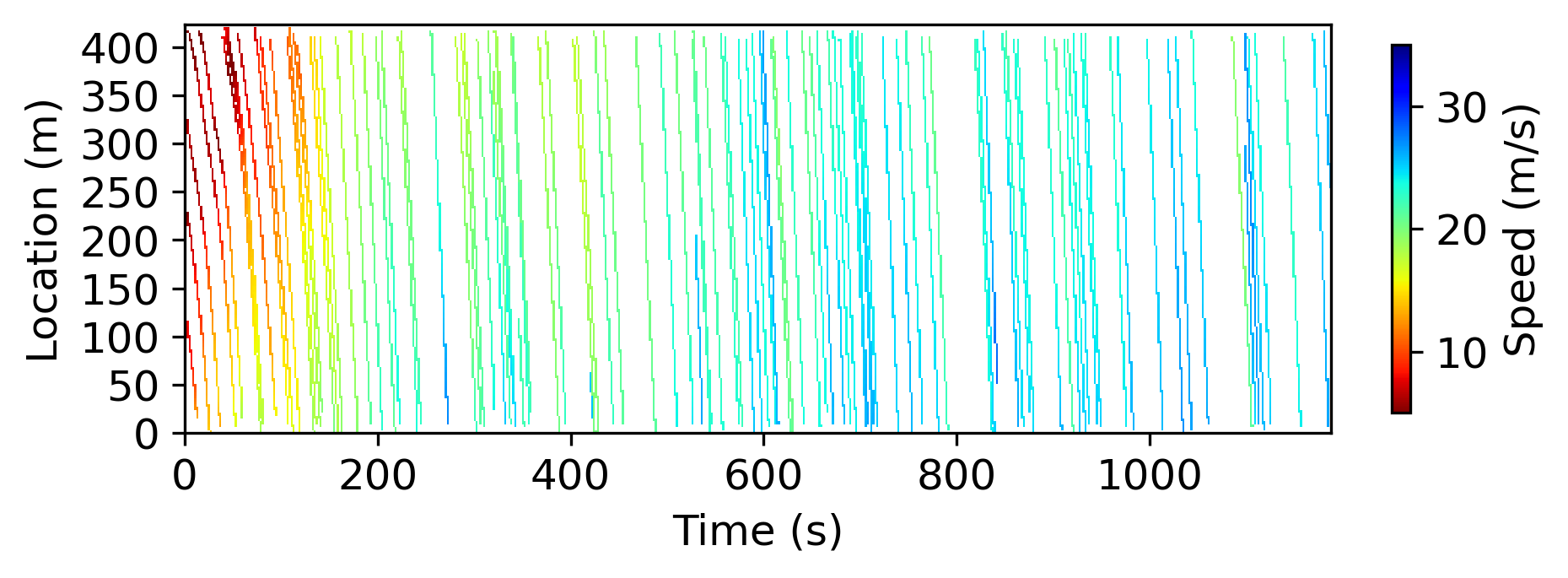}\label{speed_matrix_70_46_lane3}
}
\subfigure[Reconstructed speed field (lane \#3).]{
\centering
\includegraphics[width = 0.31\textwidth]{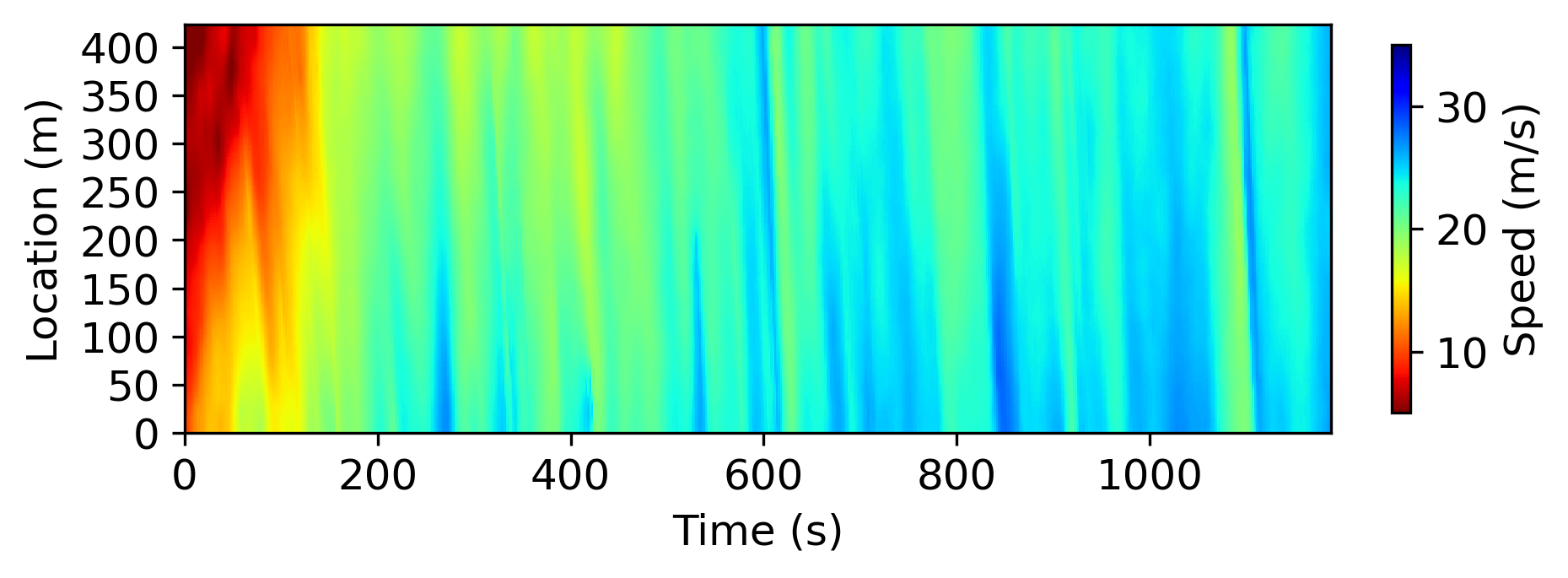}\label{HighD_speed_field_70_LCR_2D_rec_lane3}
}
\caption{Speed field reconstruction achieved by LCR-2D on 70\% masked trajectories of the HighD data. The speed fields with duration of 1,190 seconds are collected from the road segment of length 426 meters.}
\label{HighD_speed_field_reconstruction_70}
\end{figure*}

Table~\ref{LCR_reconstruction_results} gives the speed field reconstruction performance of LCR-2D and some baseline models. We can summarize the following findings: (i) As LCR$_N$ works on the univariate time series independently, spatial correlations of speed fields are totally ignored. The better performance of LCR-2D against LCR$_N$ highlights the importance of spatial modeling for speed field reconstruction. (ii) Recall that CTNNM is a special case of LCR-2D without Laplacian kernels, we empirically demonstrate that LCR-2D outperforms CTNNM on both datasets, implying that the spatiotemporal local trend modeling is of great significance for producing accurate reconstruction of sparse speed fields. (iii) QVC and LKC cannot produce accurate reconstruction of speed fields as the proposed LCR-2D model, and this implies the importance of circulant tensor nuclear norm in LCR-2D. (iv) LCR-2D performs better than a sequence of matrix/tensor completion models on both datasets. This demonstrates that an appropriate spatiotemporal modeling on the low-rank framework is important for speed field reconstruction.

As mentioned above, both circulant tensor nuclear norm and regularization term with Laplacian kernels in the objective function of LCR-2D are of great significance. Fig.~\ref{HighD_speed_field_reconstruction_70} and \ref{CitySim_speed_field_reconstruction_70} show the reconstruction of speed fields with sparse inputs (i.e., on 70\% masked trajectories). It seems that LCR-2D can achieve satisfactory results for speed field reconstruction because of the spatiotemporal correlation of traffic wave \cite{wang2021low} characterized by both global and local trends.

\begin{figure*}[ht!]
\centering
\subfigure[Original speed field (lane \#1).]{
\centering
\includegraphics[width = 0.31\textwidth]{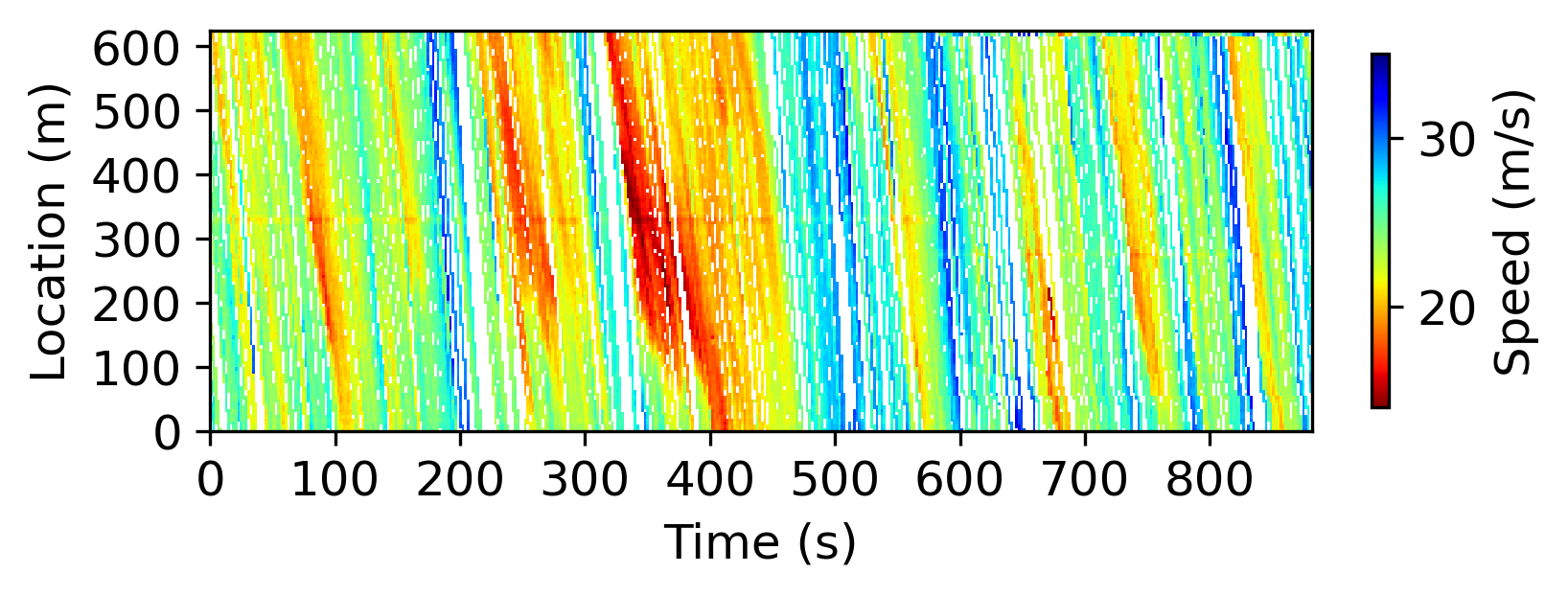}\label{speed_matrix_full_lane1}
}
\subfigure[Incomplete speed field (lane \#1).]{
\centering
\includegraphics[width = 0.31\textwidth]{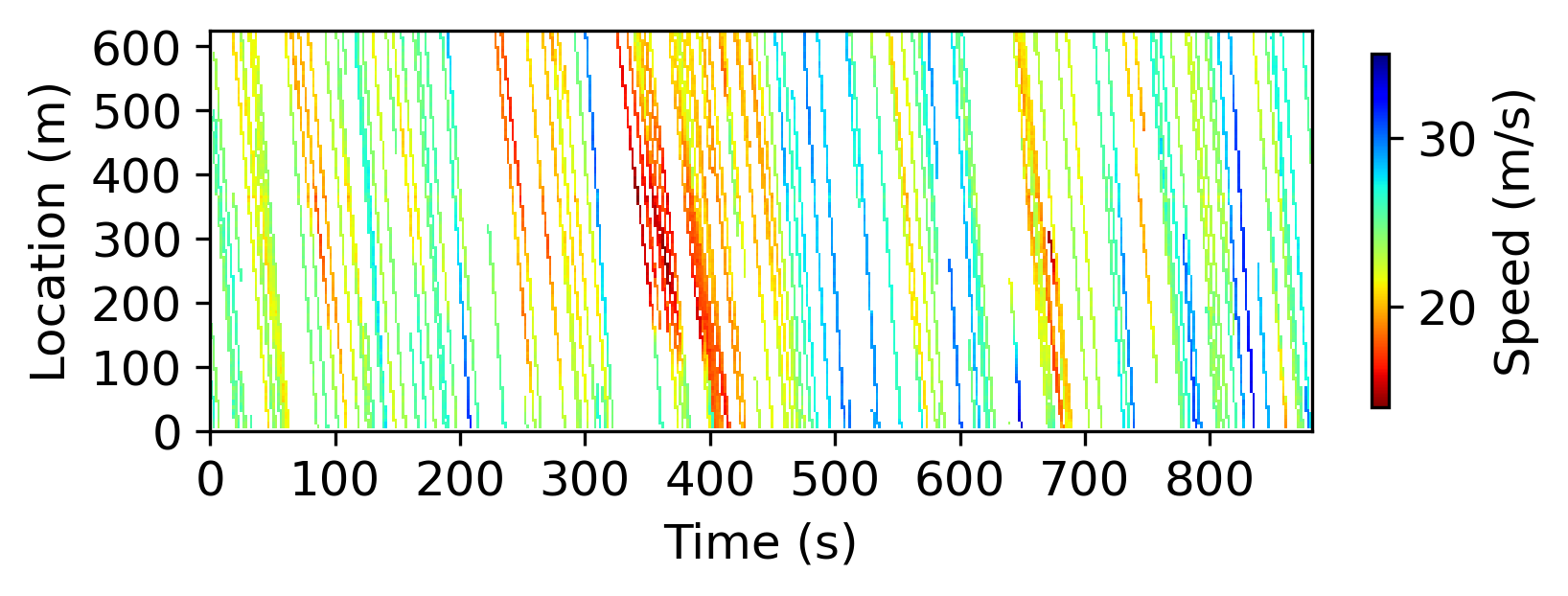}\label{speed_matrix_70_lane1}
}
\subfigure[Reconstructed speed field (lane \#1).]{
\centering
\includegraphics[width = 0.31\textwidth]{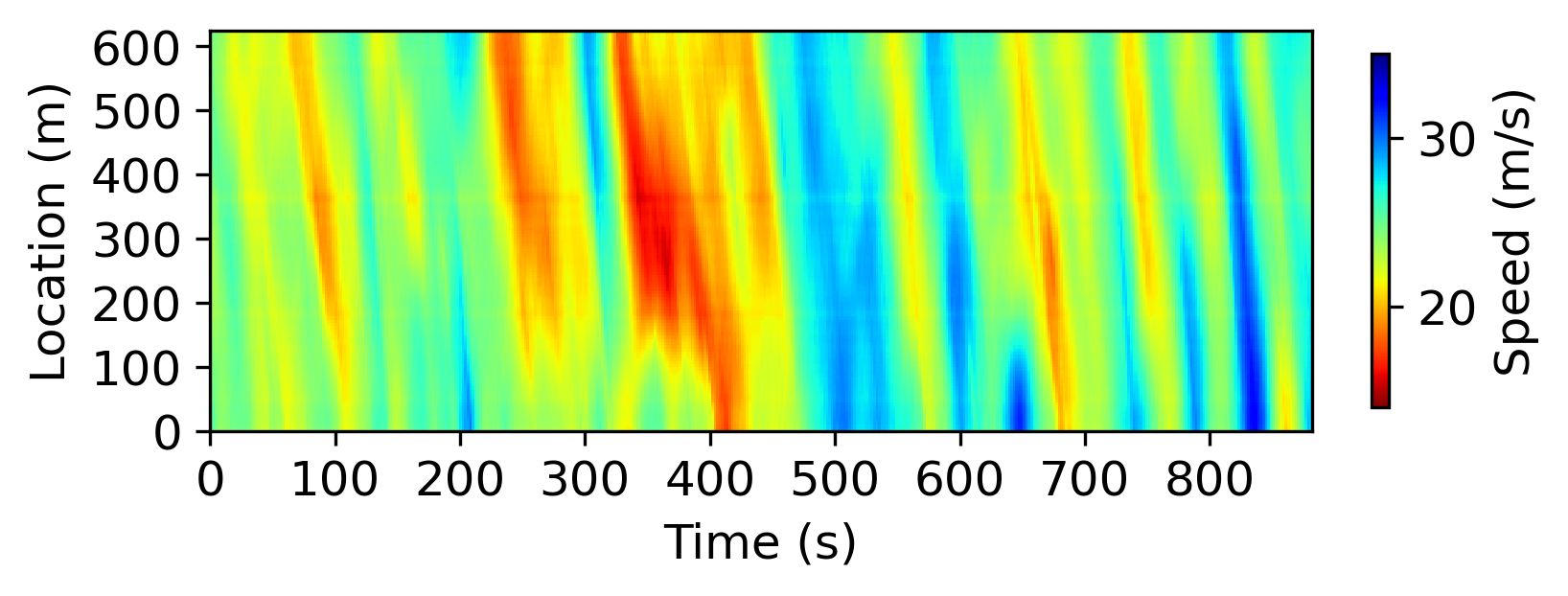}\label{CitySim_speed_field_70_LCR_2D_rec_lane1}
}
\subfigure[Original speed field (lane \#2).]{
\centering
\includegraphics[width = 0.31\textwidth]{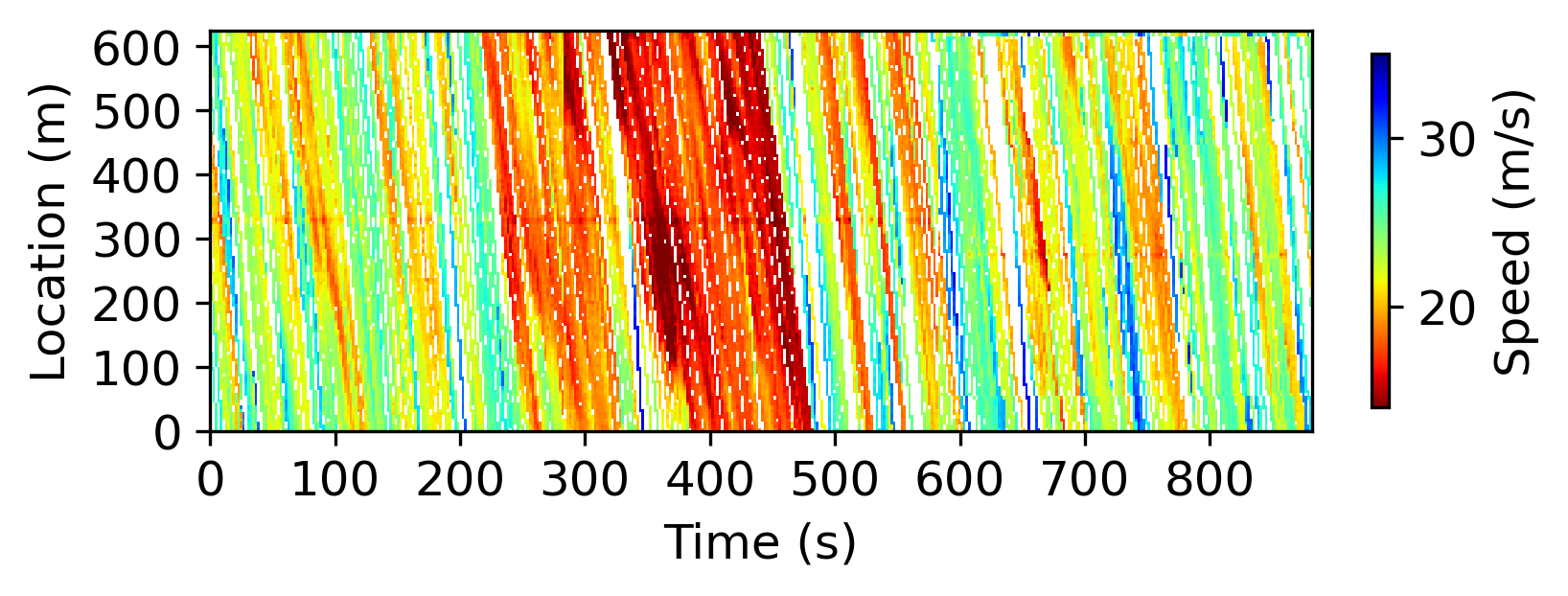}\label{speed_matrix_full_lane2}
}
\subfigure[Incomplete speed field (lane \#2).]{
\centering
\includegraphics[width = 0.31\textwidth]{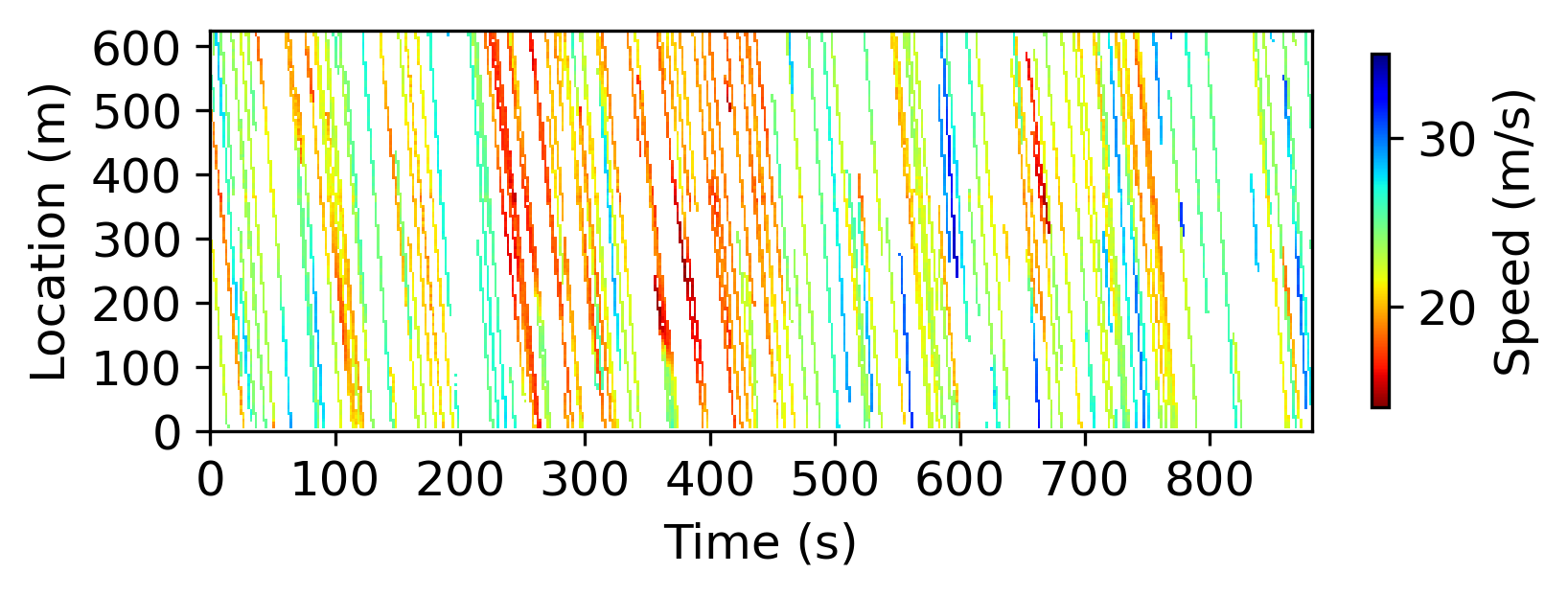}\label{speed_matrix_70_lane2}
}
\subfigure[Reconstructed speed field (lane \#2).]{
\centering
\includegraphics[width = 0.31\textwidth]{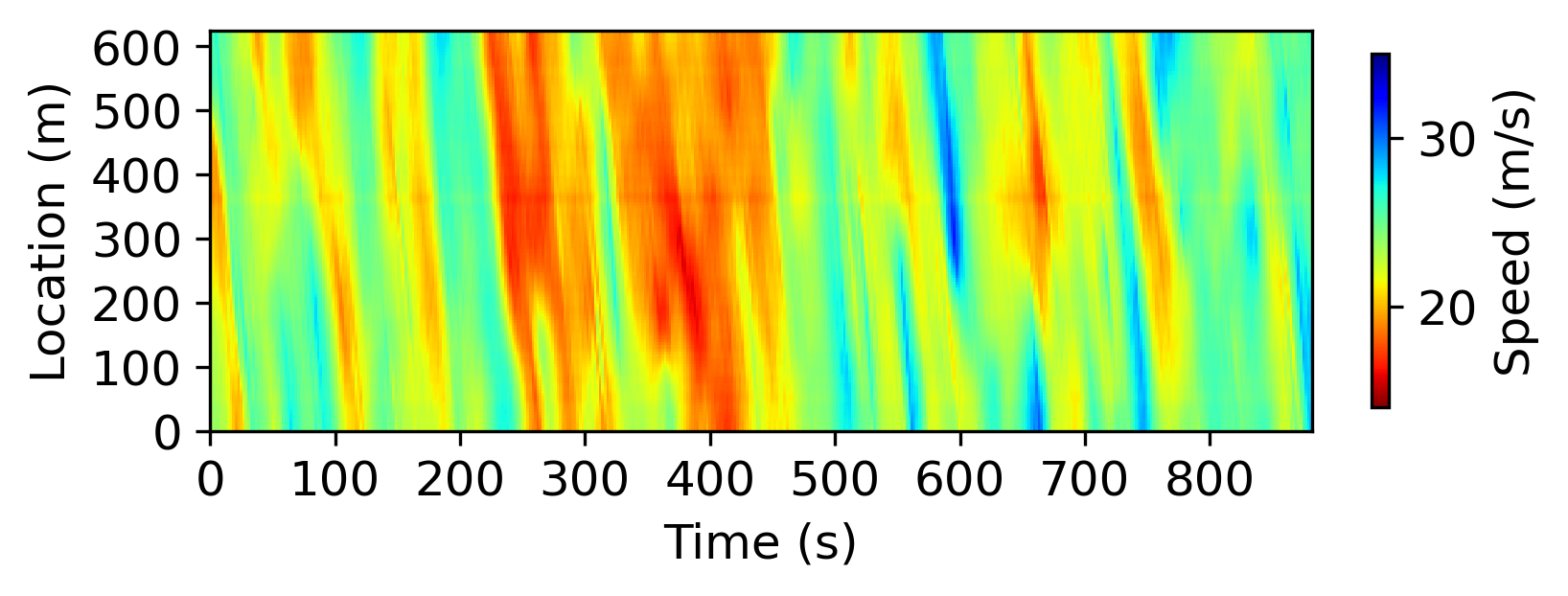}\label{CitySim_speed_field_70_LCR_2D_rec_lane2}
}
\subfigure[Original speed field (lane \#3).]{
\centering
\includegraphics[width = 0.31\textwidth]{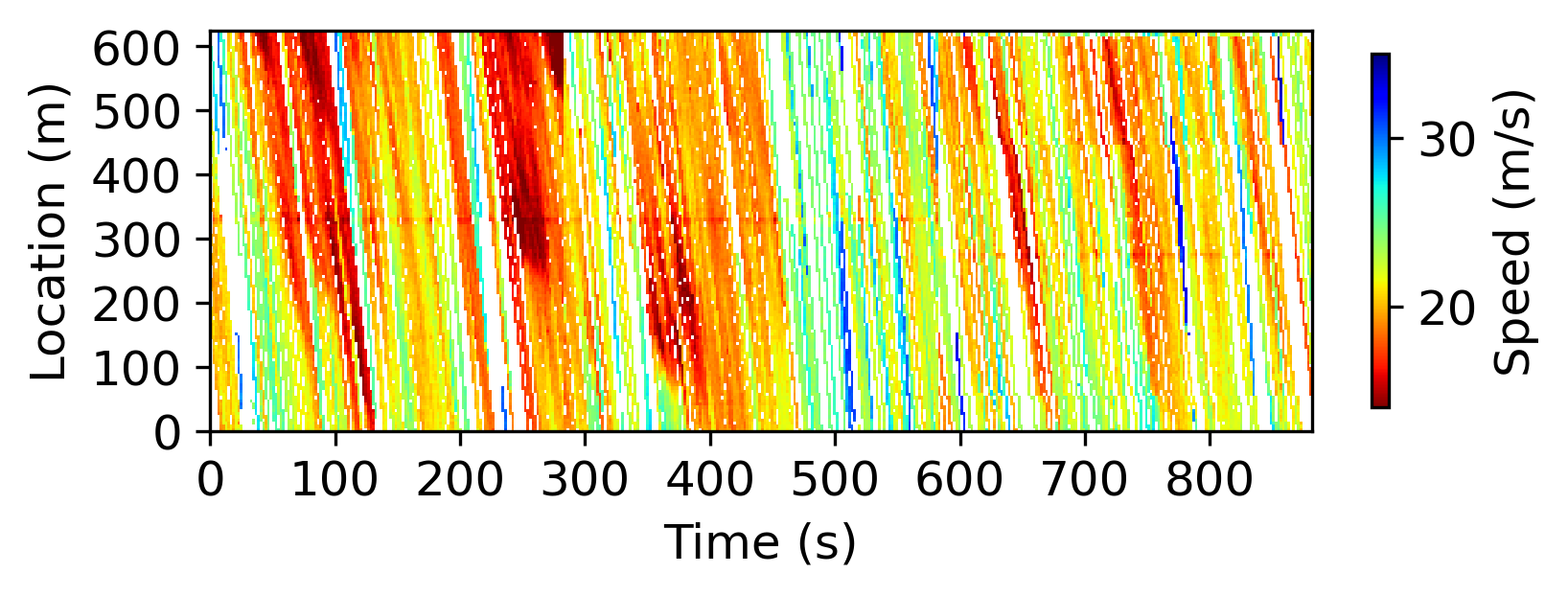}\label{speed_matrix_full_lane3}
}
\subfigure[Incomplete speed field (lane \#3).]{
\centering
\includegraphics[width = 0.31\textwidth]{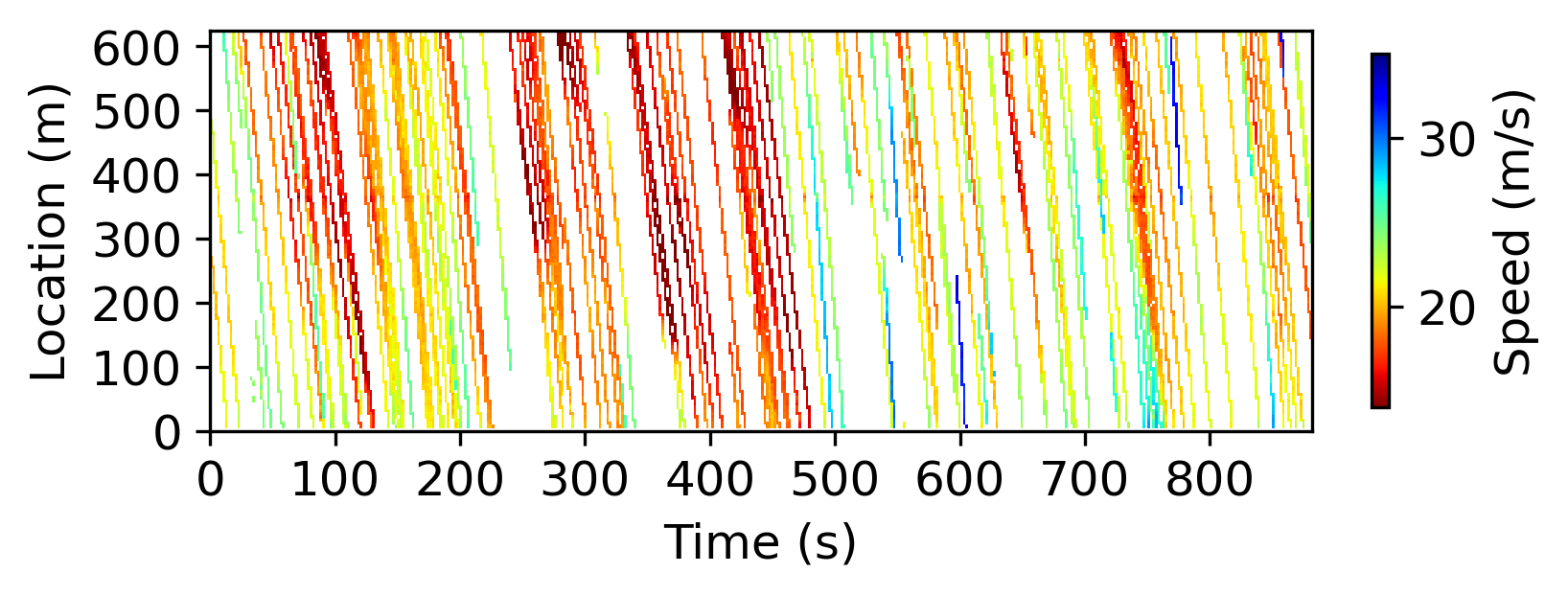}\label{speed_matrix_70_lane3}
}
\subfigure[Reconstructed speed field (lane \#3).]{
\centering
\includegraphics[width = 0.31\textwidth]{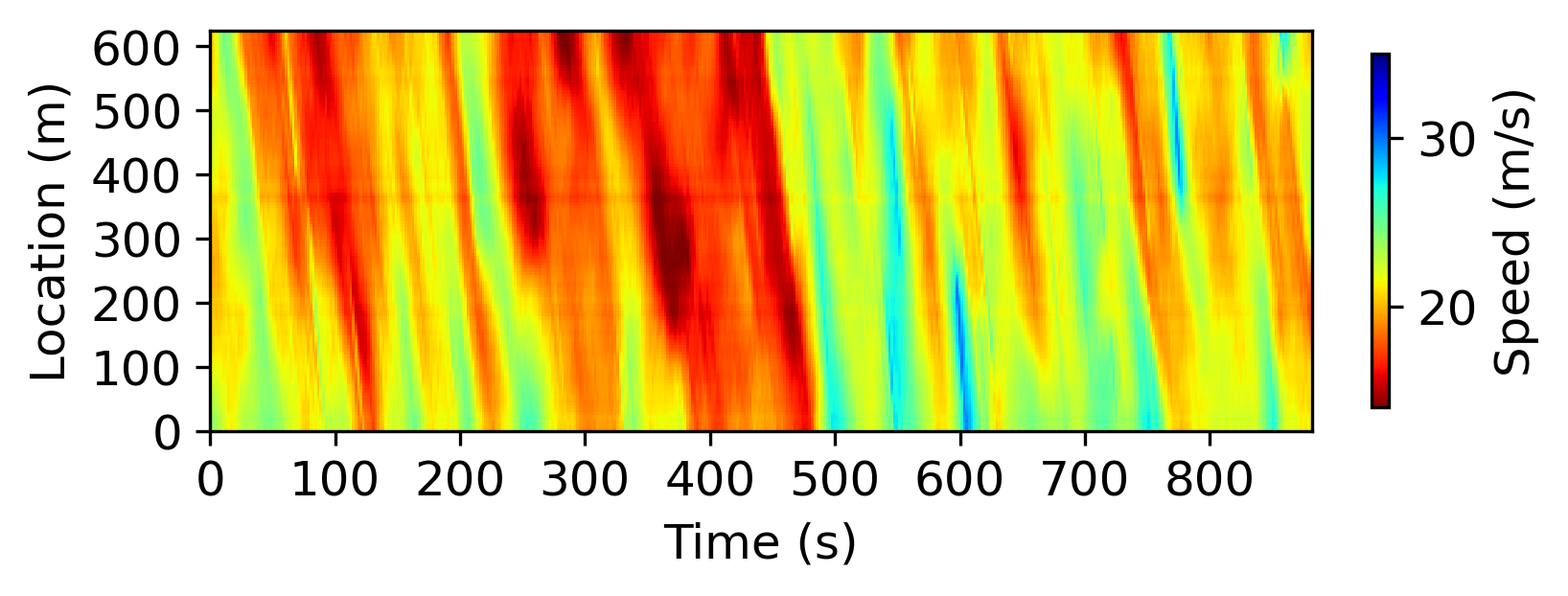}\label{CitySim_speed_field_70_LCR_2D_rec_lane3}
}
\caption{Speed field reconstruction achieved by LCR-2D on 70\% masked trajectories of the CitySim data. The speed fields with duration of 884 seconds are collected from the road segment of length 630 meters.}
\label{CitySim_speed_field_reconstruction_70}
\end{figure*}

\subsection{PeMS Traffic Speed Imputation}

In what follows, we study the generalization of LCR to high-dimensional data and evaluate the model on a large-scale traffic flow dataset. The data are collected by the California department of transportation through their Performance Measurement System (PeMS) \cite{chen2001freeway}. This dataset contains freeway traffic speed collected from 11,160 traffic measurement sensors over 4 weeks (the first 4 weeks in the year of 2018) with a 5-minute time resolution (288 time intervals per day) in California, USA.\footnote{The dataset is available at \url{https://doi.org/10.5281/zenodo.3939792}.} It can be arranged in a matrix of size $11160\times 8064$, and this dataset contains about 90 million observations.

To set up the imputation task, we randomly mask 30\%, 50\%, 70\%, and 90\% traffic speed observations as missing values, referred to as 30\%, 50\%, 70\%, and 90\% missing rates, respectively. To assess the imputation performance, we use the actual values of the masked missing entries as the ground truth to compute MAPE and RMSE. For comparison, the chosen baseline models are CircNNM \cite{liu2022recovery}, LRMC \cite{cai2010singular}, HaLRTC \cite{liu2013tensor}, LRTC-TNN \cite{chen2020nonconvex}, and nonstationary temporal matrix factorization (NoTMF \cite{chen2022nonstationary}). We also consider the comparison with (i) LCR-2D, (ii) LCR$_N$ (i.e., implementing LCR over $N$ univariate time series independently), (iii) LCR (i.e., the LCR model via vectorization on the $N$-by-$T$ multivariate time series), and (iv) CTNNM. Due to the day-to-day cyclical patterns of this dataset, we do not consider the flipping operation (see Fig.~\ref{flip_matrix}) in the numerical evaluation.

\begin{table*}[ht!]
\caption{Imputation performance (MAPE (\%)/RMSE) on the PeMS-4W traffic speed dataset. Note that the best results are emphasized in bold fonts, and we underline the second-best results.}
\label{pems_imputation}
\centering
\begin{tabular}{l|cccccccccc}
\toprule
Rate & LCR-2D & LCR$_N$ & LCR & CTNNM & CircNNM \cite{liu2022recovery} & LRMC \cite{cai2010singular} & HaLRTC \cite{liu2013tensor} & LRTC-TNN \cite{chen2020nonconvex} & NoTMF \cite{chen2022nonstationary} \\
\midrule
30\% & \underline{1.50}/\textbf{1.49} & \textbf{1.48}/\underline{1.50} & \underline{1.50}/\textbf{1.49} & 2.26/1.84 & 2.26/1.84 & 2.04/1.80 & 1.98/1.73 & 1.68/1.55 & 2.95/2.65 \\
50\% & \underline{1.76}/\textbf{1.69} & \textbf{1.73}/\underline{1.73} & \underline{1.76}/\textbf{1.69} & 2.67/2.14 & 2.69/2.15 & 2.43/2.12 & 2.22/1.98 & 1.93/1.77 & 3.05/2.73 \\
70\% & \textbf{2.07}/\textbf{2.06} & \textbf{2.07}/2.12 & \underline{2.08}/\underline{2.07} & 3.40/2.66 & 3.43/2.67 & 3.08/2.66 & 2.84/2.49 & 2.33/2.14 & 3.33/2.97 \\
90\% & \textbf{3.19}/\textbf{3.05} & 3.24/3.22 & \underline{3.21}/\underline{3.06} & 5.22/3.90 & 5.34/3.96 & 6.05/4.43 & 4.39/3.66 & 3.40/3.10 & 5.22/4.71 \\
\bottomrule
\end{tabular}
\end{table*}

As shown in Table~\ref{pems_imputation}, LCR-2D, LCR$_N$, and LCR achieve very competitive imputation accuracy and outperform baseline models. Among the baseline models in Table~\ref{pems_imputation}, CTNNM and CircNNM are the special cases of LCR-2D and LCR, respectively. Both models can be implemented by FFT like our LCR models. Comparing the imputation performance of CTNNM (or CircNNM) and LCR-2D (or LCR) shows the significant improvement of imputation achieved by LCR-2D over CTNNM, mainly due to the existence of the regularization term with Laplacian kernels. Therefore, introducing local trend modeling with Laplacian kernels in our LCR models is of great significance for traffic time series imputation. Despite the aforementioned comparison, our LCR models perform significantly better than some matrix/tensor completion algorithms such as LRMC, HaLRTC, and LRTC-TNN. Here, these matrix/tensor completion algorithms provide well-suited frameworks for reconstructing missing values in data matrix/tensor. However, they involve high time complexity in the singular value thresholding process, e.g., $$\mathcal{O}(\min\{N^2T,NT^2\})~~\text{(SVD)}\quad\text{vs.}\quad\mathcal{O}(NT\log(NT))~~\text{(FFT)},$$
making it extremely costly for large-scale problems. In contrast, our LCR models take an $\ell_1$-norm minimization in the frequency domain via the use of FFT. NoTMF jointly characterizes global and local trends by a unified and efficient temporal matrix factorization framework, but as can be seen, it is inferior to the LCR models.

\section{Conclusion}\label{conclusion}

In this study, we focus on reconstructing spatiotemporal traffic data from partial observations. To model the local trends in traffic time series, we introduce a Laplacian kernel for temporal regularization in the form of circular convolution. Following that definition, we propose an LCR model that integrates the temporal regularization into a circulant-matrix-based low-rank model for characterizing both global and local trends in traffic time series, bridging the gap between low-rank models and graph Laplacian models. When developing the solution algorithm, we borrow the properties of circulant matrix and circular convolution, and prove that our LCR model has a fast implementation with FFT. Specifically, the nuclear norm minimization with Laplacian kernelized temporal regularization can be converted into an $\ell_1$-norm minimization in complex space. Beyond univariate time series imputation, LCR can be easily adapted to multivariate or even multidimensional time series imputation.


In the numerical experiments, we conduct both univariate and multivariate time series imputation tasks on several real-world traffic flow datasets. On the sparse and noisy traffic data, LCR can accurately reconstruct traffic time series with the elimination of data noises and reinforce local time series trends, demonstrating the importance of local trend modeling. On the speed field reconstruction task, the results demonstrate the importance of spatiotemporal modeling with Laplacian kernels. On the large-scale dataset, LCR outperforms the baseline models and demonstrates strong generalization to high-dimensional problems due to the efficient implementation and relatively low time complexity. Despite the great success of temporal modeling in LCR, the key idea lies in connecting the low-rank models and the Laplacian kernelized regularization through FFT, which is also well-suited to some complicated spatiotemporal reconstruction problems. This study provides insight into traffic time series data modeling, nevertheless, the essential idea of LCR also matches the need for time series imputation and forecasting in other domains, while the existing studies (e.g., \cite{liu2022recovery, liu2022time}) already discussed the applications of circulant/convolution matrix-based methods to various time series data.



%



\section*{Acknowledgment}

Xinyu Chen would like to thank the Institute for Data Valorisation (IVADO) and the Interuniversity Research Centre on Enterprise Networks, Logistics and Transportation (CIRRELT) for providing the PhD Excellence Scholarship to support this study. The work of HanQin Cai is partially supported by NSF DMS 2304489.

\ifCLASSOPTIONcaptionsoff
  \newpage
\fi



%
\bibliographystyle{IEEEtran}
\bibliography{references}

\begin{thebibliography}{10}
\providecommand{\url}[1]{#1}
\csname url@samestyle\endcsname
\providecommand{\newblock}{\relax}
\providecommand{\bibinfo}[2]{#2}
\providecommand{\BIBentrySTDinterwordspacing}{\spaceskip=0pt\relax}
\providecommand{\BIBentryALTinterwordstretchfactor}{4}
\providecommand{\BIBentryALTinterwordspacing}{\spaceskip=\fontdimen2\font plus
\BIBentryALTinterwordstretchfactor\fontdimen3\font minus \fontdimen4\font\relax}
\providecommand{\BIBforeignlanguage}[2]{{%
\expandafter\ifx\csname l@#1\endcsname\relax
\typeout{** WARNING: IEEEtran.bst: No hyphenation pattern has been}%
\typeout{** loaded for the language `#1'. Using the pattern for}%
\typeout{** the default language instead.}%
\else
\language=\csname l@#1\endcsname
\fi
#2}}
\providecommand{\BIBdecl}{\relax}
\BIBdecl

\bibitem{chen2021bayesian}
X.~Chen and L.~Sun, ``Bayesian temporal factorization for multidimensional time series prediction,'' \emph{IEEE Transactions on Pattern Analysis and Machine Intelligence}, vol.~44, no.~9, pp. 4659--4673, 2021.

\bibitem{miao2021generative}
X.~Miao, Y.~Wu, J.~Wang, Y.~Gao, X.~Mao, and J.~Yin, ``Generative semi-supervised learning for multivariate time series imputation,'' in \emph{Proceedings of the AAAI conference on artificial intelligence}, vol.~35, no.~10, 2021, pp. 8983--8991.

\bibitem{wu2021inductive}
Y.~Wu, D.~Zhuang, A.~Labbe, and L.~Sun, ``Inductive graph neural networks for spatiotemporal kriging,'' in \emph{Proceedings of the AAAI Conference on Artificial Intelligence}, vol.~35, no.~5, 2021, pp. 4478--4485.

\bibitem{li2015trend}
L.~Li, X.~Su, Y.~Zhang, Y.~Lin, and Z.~Li, ``Trend modeling for traffic time series analysis: An integrated study,'' \emph{IEEE Transactions on Intelligent Transportation Systems}, vol.~16, no.~6, pp. 3430--3439, 2015.

\bibitem{candes2010matrix}
E.~J. Candes and Y.~Plan, ``Matrix completion with noise,'' \emph{Proceedings of the IEEE}, vol.~98, no.~6, pp. 925--936, 2010.

\bibitem{cai2010singular}
J.-F. Cai, E.~J. Cand{\`e}s, and Z.~Shen, ``A singular value thresholding algorithm for matrix completion,'' \emph{SIAM Journal on optimization}, vol.~20, no.~4, pp. 1956--1982, 2010.

\bibitem{yokota2018missing}
T.~Yokota, B.~Erem, S.~Guler, S.~K. Warfield, and H.~Hontani, ``Missing slice recovery for tensors using a low-rank model in embedded space,'' in \emph{Proceedings of the IEEE conference on computer vision and pattern recognition}, 2018, pp. 8251--8259.

\bibitem{cai2019fast}
J.-F. Cai, T.~Wang, and K.~Wei, ``Fast and provable algorithms for spectrally sparse signal reconstruction via low-rank hankel matrix completion,'' \emph{Applied and Computational Harmonic Analysis}, vol.~46, no.~1, pp. 94--121, 2019.

\bibitem{cai2021accelerated}
H.~Cai, J.-F. Cai, T.~Wang, and G.~Yin, ``Accelerated structured alternating projections for robust spectrally sparse signal recovery,'' \emph{IEEE Transactions on Signal Processing}, vol.~69, pp. 809--821, 2021.

\bibitem{yamamoto2022fast}
R.~Yamamoto, H.~Hontani, A.~Imakura, and T.~Yokota, ``Fast algorithm for low-rank tensor completion in delay-embedded space,'' in \emph{Proceedings of the IEEE/CVF Conference on Computer Vision and Pattern Recognition}, 2022, pp. 2058--2066.

\bibitem{liu2022recovery}
G.~Liu and W.~Zhang, ``Recovery of future data via convolution nuclear norm minimization,'' \emph{IEEE Transactions on Information Theory}, 2022.

\bibitem{liu2022time}
G.~Liu, ``Time series forecasting via learning convolutionally low-rank models,'' \emph{IEEE Transactions on Information Theory}, vol.~68, no.~5, pp. 3362--3380, 2022.

\bibitem{cai2010graph}
D.~Cai, X.~He, J.~Han, and T.~S. Huang, ``Graph regularized nonnegative matrix factorization for data representation,'' \emph{IEEE transactions on pattern analysis and machine intelligence}, vol.~33, no.~8, pp. 1548--1560, 2010.

\bibitem{mao2018spatio}
X.~Mao, K.~Qiu, T.~Li, and Y.~Gu, ``Spatio-temporal signal recovery based on low rank and differential smoothness,'' \emph{IEEE Transactions on Signal Processing}, vol.~66, no.~23, pp. 6281--6296, 2018.

\bibitem{xiong2010temporal}
L.~Xiong, X.~Chen, T.-K. Huang, J.~Schneider, and J.~G. Carbonell, ``Temporal collaborative filtering with bayesian probabilistic tensor factorization,'' in \emph{Proceedings of the 2010 SIAM international conference on data mining}.\hskip 1em plus 0.5em minus 0.4em\relax SIAM, 2010, pp. 211--222.

\bibitem{yu2016temporal}
H.-F. Yu, N.~Rao, and I.~S. Dhillon, ``Temporal regularized matrix factorization for high-dimensional time series prediction,'' in \emph{Advances in Neural Information Processing Systems}, 2016, pp. 847--855.

\bibitem{chen2022nonstationary}
X.~Chen, C.~Zhang, X.-L. Zhao, N.~Saunier, and L.~Sun, ``Nonstationary temporal matrix factorization for multivariate time series forecasting,'' \emph{arXiv preprint arXiv:2203.10651}, 2022.

\bibitem{chen2019abayesian}
X.~Chen, Z.~He, and L.~Sun, ``{A Bayesian tensor decomposition approach for spatiotemporal traffic data imputation},'' \emph{Transportation Research Part C: Emerging Technologies}, vol.~98, pp. 73 -- 84, 2019.

\bibitem{sedighin2020matrix}
F.~Sedighin, A.~Cichocki, T.~Yokota, and Q.~Shi, ``Matrix and tensor completion in multiway delay embedded space using tensor train, with application to signal reconstruction,'' \emph{IEEE Signal Processing Letters}, vol.~27, pp. 810--814, 2020.

\bibitem{chen2005nonnegative}
Z.~Chen and A.~Cichocki, ``Nonnegative matrix factorization with temporal smoothness and/or spatial decorrelation constraints,'' \emph{Laboratory for Advanced Brain Signal Processing, RIKEN, Tech. Rep}, vol.~68, 2005.

\bibitem{wang2018traffic}
Y.~Wang, Y.~Zhang, X.~Piao, H.~Liu, and K.~Zhang, ``Traffic data reconstruction via adaptive spatial-temporal correlations,'' \emph{IEEE Transactions on Intelligent Transportation Systems}, vol.~20, no.~4, pp. 1531--1543, 2018.

\bibitem{chen2021scalable}
X.~Chen, Y.~Chen, N.~Saunier, and L.~Sun, ``Scalable low-rank tensor learning for spatiotemporal traffic data imputation,'' \emph{Transportation research part C: emerging technologies}, vol. 129, p. 103226, 2021.

\bibitem{chen2021low}
X.~Chen, M.~Lei, N.~Saunier, and L.~Sun, ``Low-rank autoregressive tensor completion for spatiotemporal traffic data imputation,'' \emph{IEEE Transactions on Intelligent Transportation Systems}, 2021.

\bibitem{rao2015collaborative}
N.~Rao, H.-F. Yu, P.~K. Ravikumar, and I.~S. Dhillon, ``Collaborative filtering with graph information: Consistency and scalable methods,'' \emph{Advances in neural information processing systems}, vol.~28, 2015.

\bibitem{hansen2006deblurring}
P.~C. Hansen, J.~G. Nagy, and D.~P. O'leary, \emph{Deblurring images: matrices, spectra, and filtering}.\hskip 1em plus 0.5em minus 0.4em\relax SIAM, 2006.

\bibitem{wright2022high}
J.~Wright and Y.~Ma, \emph{High-dimensional data analysis with low-dimensional models: Principles, computation, and applications}.\hskip 1em plus 0.5em minus 0.4em\relax Cambridge University Press, 2022.

\bibitem{brunton2022data}
S.~L. Brunton and J.~N. Kutz, \emph{Data-driven science and engineering: Machine learning, dynamical systems, and control}.\hskip 1em plus 0.5em minus 0.4em\relax Cambridge University Press, 2022.

\bibitem{sandryhaila2013discrete}
A.~Sandryhaila and J.~M. Moura, ``Discrete signal processing on graphs: Graph fourier transform,'' in \emph{2013 IEEE International Conference on Acoustics, Speech and Signal Processing}.\hskip 1em plus 0.5em minus 0.4em\relax IEEE, 2013, pp. 6167--6170.

\bibitem{chen2016direct}
C.~Chen, B.~He, Y.~Ye, and X.~Yuan, ``The direct extension of admm for multi-block convex minimization problems is not necessarily convergent,'' \emph{Mathematical Programming}, vol. 155, no. 1-2, pp. 57--79, 2016.

\bibitem{yang2009fast}
J.~Yang, W.~Yin, Y.~Zhang, and Y.~Wang, ``A fast algorithm for edge-preserving variational multichannel image restoration,'' \emph{SIAM Journal on Imaging Sciences}, vol.~2, no.~2, pp. 569--592, 2009.

\bibitem{liu2012robust}
G.~Liu, Z.~Lin, S.~Yan, J.~Sun, Y.~Yu, and Y.~Ma, ``Robust recovery of subspace structures by low-rank representation,'' \emph{IEEE transactions on pattern analysis and machine intelligence}, vol.~35, no.~1, pp. 171--184, 2012.

\bibitem{candes2012exact}
E.~Candes and B.~Recht, ``Exact matrix completion via convex optimization,'' \emph{Communications of the ACM}, vol.~55, no.~6, pp. 111--119, 2012.

\bibitem{semerci2014tensor}
O.~Semerci, N.~Hao, M.~E. Kilmer, and E.~L. Miller, ``Tensor-based formulation and nuclear norm regularization for multienergy computed tomography,'' \emph{IEEE Transactions on Image Processing}, vol.~23, no.~4, pp. 1678--1693, 2014.

\bibitem{lu2019tensor}
C.~Lu, J.~Feng, Y.~Chen, W.~Liu, Z.~Lin, and S.~Yan, ``Tensor robust principal component analysis with a new tensor nuclear norm,'' \emph{IEEE transactions on pattern analysis and machine intelligence}, vol.~42, no.~4, pp. 925--938, 2019.

\bibitem{kolda2009tensor}
T.~G. Kolda and B.~W. Bader, ``Tensor decompositions and applications,'' \emph{SIAM Review}, vol.~51, no.~3, pp. 455--500, 2009.

\bibitem{brigham1988fast}
E.~O. Brigham, \emph{The fast Fourier transform and its applications}.\hskip 1em plus 0.5em minus 0.4em\relax Prentice-Hall, Inc., 1988.

\bibitem{treiber2013traffic}
M.~Treiber and A.~Kesting, ``Traffic flow dynamics,'' \emph{Traffic Flow Dynamics: Data, Models and Simulation, Springer-Verlag Berlin Heidelberg}, pp. 983--1000, 2013.

\bibitem{de2008traffic}
C.~De~Fabritiis, R.~Ragona, and G.~Valenti, ``Traffic estimation and prediction based on real time floating car data,'' in \emph{2008 11th international IEEE conference on intelligent transportation systems}.\hskip 1em plus 0.5em minus 0.4em\relax IEEE, 2008, pp. 197--203.

\bibitem{krajewski2018highd}
R.~Krajewski, J.~Bock, L.~Kloeker, and L.~Eckstein, ``The highd dataset: A drone dataset of naturalistic vehicle trajectories on german highways for validation of highly automated driving systems,'' in \emph{2018 21st international conference on intelligent transportation systems (ITSC)}.\hskip 1em plus 0.5em minus 0.4em\relax IEEE, 2018, pp. 2118--2125.

\bibitem{zheng_2023citysim}
O.~Zheng, M.~Abdel-Aty, L.~Yue, A.~Abdelraouf, Z.~Wang, and N.~Mahmoud, ``{CitySim}: A drone-based vehicle trajectory dataset for safety-oriented research and digital twins,'' \emph{Transportation Research Record: Journal of the Transportation Research Board}, jul 2023.

\bibitem{liu2013tensor}
J.~Liu, P.~Musialski, P.~Wonka, and J.~Ye, ``Tensor completion for estimating missing values in visual data,'' \emph{IEEE Transactions on Pattern Analysis and Machine Intelligence}, vol.~35, no.~1, pp. 208--220, 2013.

\bibitem{chen2020nonconvex}
X.~Chen, J.~Yang, and L.~Sun, ``A nonconvex low-rank tensor completion model for spatiotemporal traffic data imputation,'' \emph{arXiv preprint arXiv:2003.10271}, 2020.

\bibitem{wang2021low}
X.~Wang, Y.~Wu, D.~Zhuang, and L.~Sun, ``Low-rank hankel tensor completion for traffic speed estimation,'' \emph{arXiv preprint arXiv:2105.11335}, 2021.

\bibitem{chen2001freeway}
C.~Chen, K.~Petty, A.~Skabardonis, P.~Varaiya, and Z.~Jia, ``Freeway performance measurement system: mining loop detector data,'' \emph{Transportation Research Record}, vol. 1748, no.~1, pp. 96--102, 2001.

\end{thebibliography}

%

\begin{IEEEbiography}[{\includegraphics[width=1in,height=1.25in,clip,keepaspectratio]{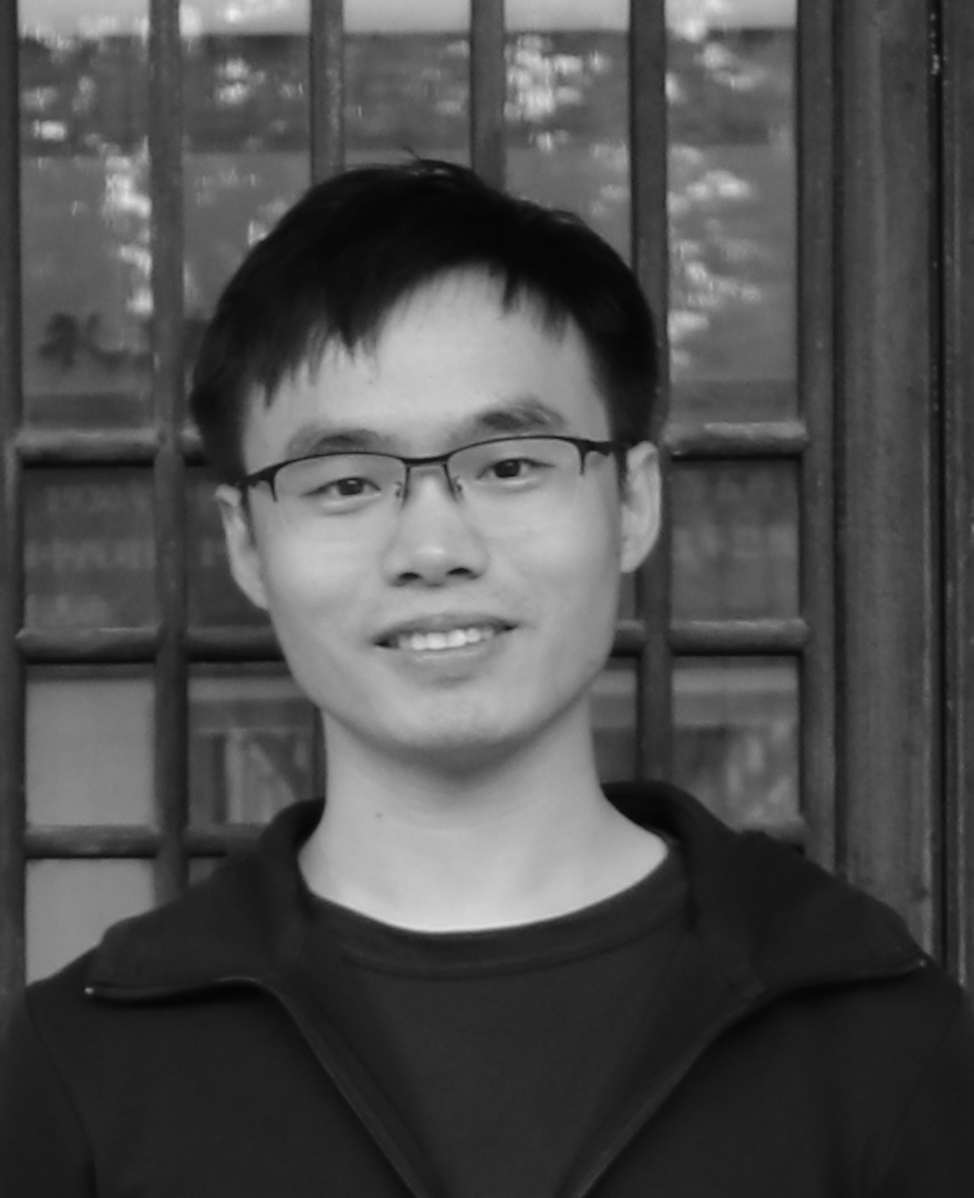}}]{Xinyu Chen} received his Ph.D. degree from the University of Montreal, Montreal, QC, Canada. He is now a Postdoctoral Associate at Massachusetts Institute of Technology, Cambridge, MA, United States. His current research centers on machine learning, spatiotemporal data modeling, intelligent transportation systems, and urban science.
\end{IEEEbiography}

\begin{IEEEbiography}[{\includegraphics[width=1in,height=1.25in,clip,keepaspectratio]{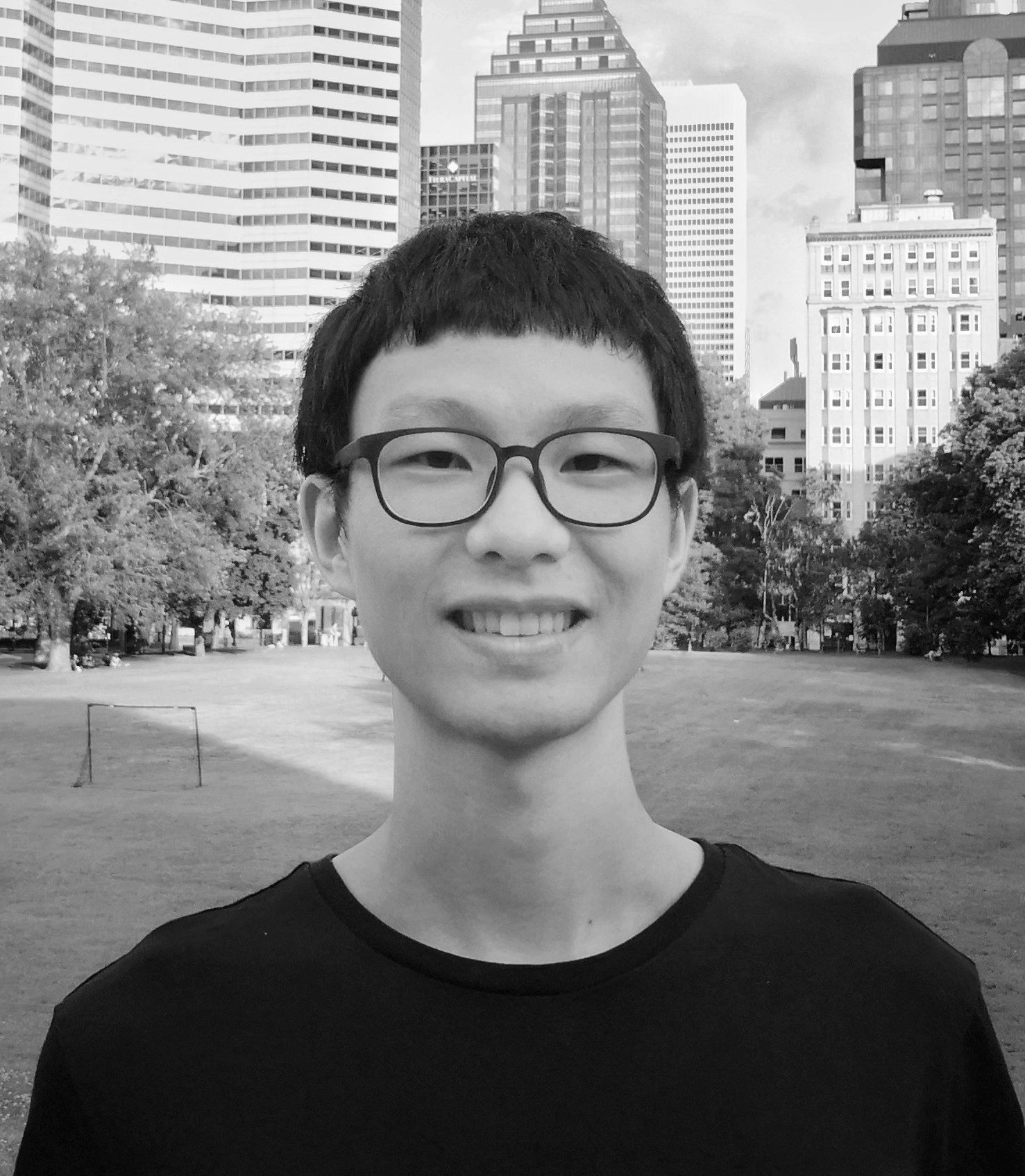}}]{Zhanhong Cheng} received his Ph.D. degree from McGill University. He received his B.S. and M.S. degrees from Harbin Institute of Technology, Harbin, China. He is now a Postdoc researcher in the Department of Civil Engineering at McGill University, Montreal, QC, Canada. His research interests include public transportation, travel behavior modeling, spatiotemporal forecasting, and machine learning in transportation.
\end{IEEEbiography}

\begin{IEEEbiography}[{\includegraphics[width=1in,height=1.25in,clip,keepaspectratio]{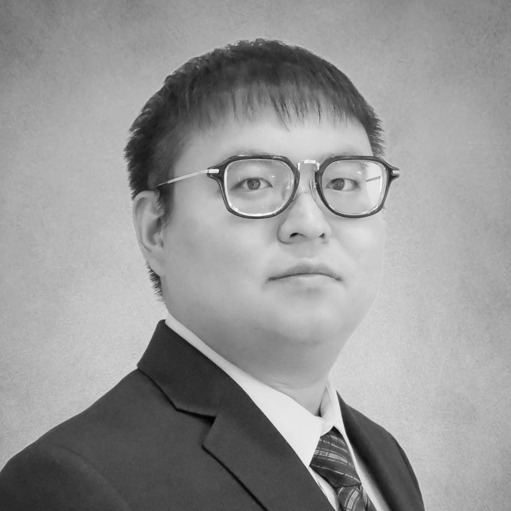}}]{HanQin Cai} received the PhD degree in applied mathematics and computational sciences from the University of Iowa. He is currently the Paul N. Somerville Endowed assistant professor with the Department of Statistics and Data Science and the Department of Computer Science, University of Central Florida. He is also the director of Data Science Lab. His research interests include machine learning, data science, mathematical optimization, and applied harmonic analysis.
\end{IEEEbiography}


\begin{IEEEbiography}[{\includegraphics[width=1in,height=1.25in,clip,keepaspectratio]{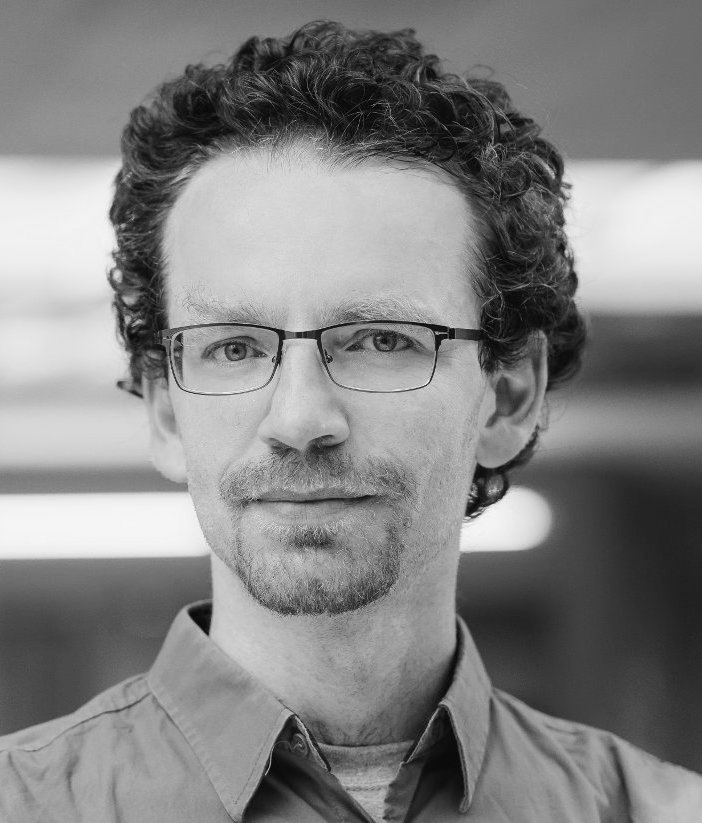}}]{Nicolas Saunier} received an engineering degree and a Doctorate (Ph.D.) in computer science from Telecom ParisTech, Paris, France, respectively in 2001 and 2005. He is currently a Full Professor with the Civil, Geological and Mining Engineering Department at Polytechnique Montreal, Montreal, QC, Canada. His research interests include intelligent transportation, road safety, and data science for transportation.
\end{IEEEbiography}

\begin{IEEEbiography}[{\includegraphics[width=1in,height=1.25in,clip,keepaspectratio]{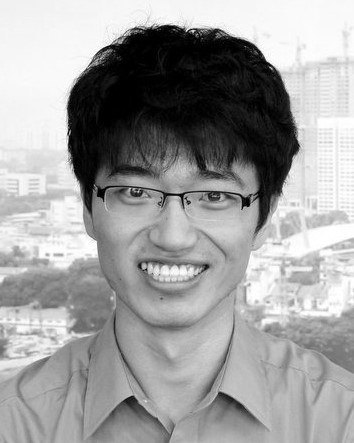}}]{Lijun Sun}(Senior Member, IEEE) received the B.S. degree in civil engineering from Tsinghua University, Beijing, China, in 2011, and the Ph.D. degree in civil engineering (transportation) from the National the University of Singapore in 2015. He is currently an Associate Professor and William Dawson Scholar in the Department of Civil Engineering, McGill University, Montreal, QC, Canada. His research centers on intelligent transportation systems, traffic control and management, spatiotemporal modeling, Bayesian statistics, and agent-based simulation.
\end{IEEEbiography}




\end{document}